\title{Scalable DP-SGD: Shuffling vs. Poisson Subsampling}
\author{%
  Lynn Chua \\
  Google Research \\
  \texttt{chualynn@google.com} \\
  \And
  Badih Ghazi \\
  Google Research \\
  \texttt{badihghazi@gmail.com} \\
  \And
  Pritish Kamath \\
  Google Research \\
  \texttt{pritishk@google.com} \\
  \AND
  Ravi Kumar \\
  Google Research \\
  \texttt{ravi.k53@gmail.com} \\
  \And
  Pasin Manurangsi \\
  Google Research \\
  \texttt{pasin@google.com} \\
  \And
  Amer Sinha \\
  Google Research \\
  \texttt{amersinha@google.com}\\
  \And
  Chiyuan Zhang\\
  Google Research \\
  \texttt{chiyuan@google.com}\\
}
\def\Comments{0}
\newcommand{\PARAMETERS}{\STATE \hspace{-3.5mm}{\bf Params:}\xspace}
\theoremstyle{plain}
\newtheorem{theorem}{Theorem}[section]
\newtheorem{proposition}[theorem]{Proposition}
\theoremstyle{definition}
\newtheorem{definition}[theorem]{Definition}
\theoremstyle{remark}
\def\ddefloop#1{\ifx\ddefloop#1\else\ddef{#1}\expandafter\ddefloop\fi}
\def\ddef#1{\expandafter\def\csname #1\endcsname{\ensuremath{\mathbb{#1}}}}
\def\ddef#1{\expandafter\def\csname c#1\endcsname{\ensuremath{\mathcal{#1}}}}
\def\ddef#1{\expandafter\def\csname b#1\endcsname{\ensuremath{\bm #1}}}
\definecolor{Ggreen}{RGB}{60, 186, 84}
\providecommand{\Comments}{1}
\newcommand{\mytodo}[1]{\ifnum\Comments=1{#1}\fi}
\newcommand{\pritishimp}[1]{\ifnum\Comments<4\todo[linecolor=Gred,backgroundcolor=Gred!25,bordercolor=Gred]{Pritish: #1}\fi}
\newcommand{\pritish}[1]{\ifnum\Comments<3\todo[linecolor=myGold,backgroundcolor=myGold!25,bordercolor=myGold]{Pritish: #1}\fi}
\newcommand{\pritishinfo}[1]{\ifnum\Comments<2\todo[linecolor=Ggreen,backgroundcolor=Ggreen!25,bordercolor=Ggreen]{Pritish: #1}\fi}
\newcommand{\tableoftodos}{\ifnum\Comments=1 \listoftodos[Comments/To Do's] \fi}
\newcommand{\eps}{\varepsilon}
\newcommand{\prn}[1]{\left ( #1 \right )}
\newcommand{\Ncdf}{\Phi} %
\newcommand{\Deps}{D_{e^{\eps}}}
\newcommand{\dTV}{d_{\mathrm{TV}}}
\newcommand{\DP}{\mathsf{DP}}
\newcommand{\DPSGD}{\mathsf{DP}\text{-}\mathsf{SGD}}
\newcommand{\ABLQ}{\mathsf{ABLQ}}
\newcommand{\SP}{\cS^{\diamond}}
\newcommand{\SD}{\cS^{\circlearrowright}}
\newcommand{\ABLQB}{\ABLQ_{\cB}}
\newcommand{\ABLQD}{\ABLQ_{\cD}}
\newcommand{\ABLQP}{\ABLQ_{\cP}}
\newcommand{\ABLQS}{\ABLQ_{\cS}}
\newcommand{\ABLQSP}{\ABLQ_{\SP}}
\newcommand{\ABLQSD}{\ABLQ_{\SD}}
\newcommand{\epsB}{\eps_{\cB}}
\newcommand{\epsD}{\eps_{\cD}}
\newcommand{\epsS}{\eps_{\cS}}
\newcommand{\deltaB}{\delta_{\cB}}
\newcommand{\deltaD}{\delta_{\cD}}
\newcommand{\deltaP}{\delta_{\cP}}
\newcommand{\deltaSP}{\delta_{\SP}}
\newcommand{\deltaSD}{\delta_{\SD}}
\newcommand{\sigmaB}{\sigma_{\cB}}
\newcommand{\sigmaD}{\sigma_{\cD}}
\newcommand{\sigmaP}{\sigma_{\cP}}
\newcommand{\sigmaS}{\sigma_{\cS}}
\newcommand{\sigmaSP}{\sigma_{\SP}}
\newcommand{\sigmaSD}{\sigma_{\SD}}
\newcommand{\PD}{P_{\cD}}
\newcommand{\QD}{Q_{\cD}}
\newcommand{\PS}{P_{\cS}}
\newcommand{\QS}{Q_{\cS}}
\newcommand{\tPS}{\tilde{P}_{\cS}}
\newcommand{\tQS}{\tilde{Q}_{\cS}}
\newcommand{\PSP}{P_{\SP}}
\newcommand{\QSP}{Q_{\SP}}
\newcommand{\PSD}{P_{\SD}}
\newcommand{\QSD}{Q_{\SD}}
\newcommand{\PP}{P_{\cP}}
\newcommand{\QP}{Q_{\cP}}
\newcommand{\event}{\Gamma}
\newcommand{\dominates}{\succcurlyeq}%
\newcommand{\tdominates}{\equiv}%
\begin{document}

\maketitle

\begin{abstract}

We provide new lower bounds on the privacy guarantee of the {\em multi-epoch} Adaptive Batch Linear Queries (ABLQ) mechanism with {\em shuffled batch sampling}, demonstrating substantial gaps when compared to {\em Poisson subsampling}; prior analysis was limited to a single epoch.
Since the privacy analysis of Differentially Private Stochastic Gradient Descent (DP-SGD) is obtained by analyzing the ABLQ mechanism, this brings into serious question the common practice of implementing shuffling-based DP-SGD, but reporting privacy parameters as if Poisson subsampling was used.
To understand the impact of this gap on the utility of trained machine learning models, we introduce a practical approach to implement Poisson subsampling {\em at scale} using massively parallel computation, and efficiently train models with the same.
We compare the utility of models trained with Poisson-subsampling-based DP-SGD, and the optimistic estimates of utility when using shuffling, via our new lower bounds on the privacy guarantee of ABLQ with shuffling.%
\end{abstract}

\section{Introduction}\label{sec:intro}
A common approach for private training of differentiable models, such as neural networks, is to apply first-order methods with noisy gradients. This general framework is known as $\DPSGD$ (Differentially Private Stochastic Gradient Descent)~\citep{abadi16deep}; the framework itself is compatible with any optimization sub-routine. Multiple open source implementations exist for applying $\DPSGD$ in practice, namely, \cite{tf_privacy}, JAX Privacy~\citep{jax-privacy2022github} and PyTorch Opacus~\citep{yousefpour21opacus}; and $\DPSGD$ has been applied widely in various machine learning domains~\citep[e.g.,][]{tramer2020differentially,de22unlocking,bu2022scalable,NEURIPS2020_9547ad6b,dockhorn2022differentially,anil22dpbert,he2022exploring,igamberdiev2023dp,tang2024private}.

$\DPSGD$~(\Cref{alg:dpsgd}) processes the training data in a sequence of steps, where at each step, a noisy estimate of the average gradient over a mini-batch is computed and used to perform a first-order update over the differentiable model. To obtain the noisy (average) gradient, the gradient $g$ for each example in the mini-batch is {\em clipped} to have norm at most $C$ (a pre-determined fixed bound), by setting $[g]_C := g \cdot \min\{1, C / \|g\|_2\}$,  and computing the sum over the batch; then independent zero-mean noise drawn from the Gaussian distribution of scale $\sigma C$ is added to each coordinate of the summed gradient. This could then be scaled by the ``target'' mini-batch size to obtain a noisy average gradient.\footnote{As explained later, for Poisson batch sampler, the mini-batch size is not a constant, but the scaling has to be done with the ``target'' mini-batch size, and not the realized mini-batch size.} The privacy guarantee of the mechanism depends on the following parameters: the noise scale $\sigma$, the number of examples in the training dataset, the size of mini-batches, the number of training steps, and the {\em mini-batch generation process}.

{
\begin{algorithm}[t]
\caption{$\DPSGD$: Differentially Private Stochastic Gradient Descent~\citep{abadi16deep}}
\label{alg:dpsgd}
\begin{algorithmic}
\PARAMETERS Batch sampler $\cB$ (samples $T$ batches, with ``target'' batch size $b$), differentiable loss $\ell : \R^d \times \cX \to \R^d$, initial model state $w_0$, clipping norm $C$, noise scale $\sigma$.
\REQUIRE Dataset $\bx = (x_1, \ldots, x_n)$.
\ENSURE Final model state $\bw_T \in \R^d$.
\STATE $(S_1, \ldots, S_T) \gets \cB(n)$ \hfill 
\FOR{$t = 1, \ldots, T$}
    \STATE $g_t \gets \frac{1}{b} \prn{\cN(0, \sigma^2 C^2 I_d) + \sum_{x \in S_t} [\nabla_{\bw} \ell(\bw; x)]_C}$
    \STATE $\bw_t \gets \bw_{t-1} - \eta_t g_t$ \hfill \textcolor{black!50!Ggreen}{$\triangleright$ Could also be some other optimization method.}
\ENDFOR
\RETURN $w_T$
\end{algorithmic}
\end{algorithm}
}

In practice, almost all deep learning systems generate mini-batches of fixed-size by sequentially going over the dataset, possibly applying a global {\em shuffling} of all the examples in the %
dataset for each training epoch; each epoch corresponds to a single pass over the dataset, and the ordering of the examples may be kept the same or resampled between different epochs. However, performing the privacy analysis for such a mechanism has appeared to be technically difficult due to correlation between the different mini-batches. \cite{abadi16deep} instead consider a different mini-batch generation process of {\em Poisson subsampling}, wherein each mini-batch is generated independently by including each example with a fixed probability. This mini-batch generation process is however rarely implemented in practice, and consequently it has become common practice to use some form of shuffling in applications, but to report privacy parameters as if Poisson subsampling was used (see, e.g., the survey by \citet[Section 4.3]{ponomareva23dpfy}). 
A notable exception is the PyTorch Opacus library~\citep{yousefpour21opacus} that supports the option of Poisson subsampling; however, this implementation only works well for datasets that allow efficient random access (for instance by loading it entirely into memory).
To the best of knowledge, Poisson subsampling has not been used for training with $\DPSGD$ on massive datasets.

The privacy analysis of $\DPSGD$ is usually performed by viewing it as a post-processing of an {\em Adaptive Batch Linear Queries ($\ABLQ$)} mechanism that releases the estimates of a sequence of adaptively chosen linear queries on the mini-batches~(formal definitions in Section~\ref{subsec:ablq}).
\cite{chua2024private} showed that the privacy loss of $\ABLQ$ with shuffling can be significantly higher than that with Poisson subsampling for small values of $\sigma$. Even though their analysis only applied to a {\em single epoch} mechanism,
this has put under serious question the aforementioned common practice of implementing $\DPSGD$ with some form of shuffling while reporting privacy parameters assuming Poisson subsampling.
The motivating question for our work is:

\begin{quote}\centering
\slshape
Which batch sampler provides the best utility for models trained with $\DPSGD$,\\
when applied with the correct corresponding privacy accounting?
\end{quote}

\subsection{Contributions}\label{subsec:contributions}
\vspace{-2mm}
Our contributions are summarized as follows.

\vspace{-2.5mm}
\paragraph{\boldmath Privacy Analysis of Multi-Epoch $\ABLQ$ with Shuffling.}
We provide {\em lower bounds} on the privacy guarantees of shuffling-based $\ABLQ$ to handle {\em multiple epochs}. We consider the cases of both (i) {\em Persistent Shuffling}, wherein the examples are globally shuffled once and the order is kept the same between epochs, and
(ii) {\em Dynamic Shuffling}, wherein the examples are globally shuffled independently for each epoch.
Since our technique provides a lower bound on the privacy guarantee, the utility of the models obtained via shuffling-based $\DPSGD$ with this privacy accounting is an optimistic estimate of the utility under the correct accounting.

\vspace{-2.5mm}
\paragraph{\boldmath Scalable Implementation of $\DPSGD$ with Poisson Subsampling via Truncation.}
Variable batches are typically inconvenient to handle in deep learning systems. For example, upon a change in the input shape, \texttt{jax.jit} triggers a recompilation of the computation graph, and \texttt{tf.function} will retrace the computation graph.
Additionally, Google TPUs %
require all operations to have fixed input and output shapes.
We introduce {\em truncated Poisson subsampling} to circumvent variable batch sizes.
In particular, we choose an upper bound on the maximum batch size $B$ that our training can handle, and given any variable size batch $b$, if $b \ge B$, we randomly sub-select $B$ examples to retain in the batch, and if $b < B$, we pad the batch with $B-b$ dummy examples {\em with zero weight}. This deviates slightly from the standard Poisson subsampling process since our batch sizes can never exceed $B$. We choose $B$ to be sufficiently larger than the expected batch size, so that the probability that the sampled batch size $b$ exceeds the maximum allowed batch size $B$ is small. We provide a modification to the analysis of $\ABLQ$ with Poisson subsampling in order to handle this difference.

Generating these truncated Poisson subsampled batches can be difficult when the dataset is too large to fit in memory. We provide a scalable approach to the generation of batches with truncated Poisson subsampling using {\em massively parallel computation}~\citep{dean04mapreduce}. This can be easily specified using frameworks like \texttt{beam}~\citep{apache_beam} and implemented on distributed platforms such as \cite{apache_flink}, \cite{apache_spark}, or \cite{google_cloud_dataflow}.

Our detailed experimental results are presented in \Cref{sec:experiments}, and summarized below:
\begin{itemize}[leftmargin=*,itemsep=2pt,topsep=-2pt,parsep=0pt]
\item $\DPSGD$ with Shuffle batch samplers performs similarly to Poisson subsampling for the same $\sigma$.
\item However, $\DPSGD$ with Shuffle batch samplers, with our optimistic privacy accounting, perform worse than Poisson subsampling in high privacy regimes (small values of $\eps$).
\end{itemize}

Thus, our results suggest that Poisson subsampling is a viable option for implementing $\DPSGD$ at scale, with almost no loss in utility compared to the traditional approach that uses shuffling with (incorrect) accounting assuming Poisson subsampling.

\subsection{Related Work}

\cite{chua2024private} demonstrated gaps in the privacy analysis of $\ABLQ$ using shuffling and Poisson subsampling, by providing a {\em lower bound} on the privacy guarantee of $\ABLQ$ with shuffling; their technique, however, was specialized for one epoch. We extend their technique to the {\em multi-epoch} version of $\ABLQ$ with shuffling and provide lower bounds for both persistent and dynamic batching.

\cite{lebeda2024avoiding} also point out gaps in the privacy analysis of $\ABLQ$ with Poisson subsampling and with sampling batches of fixed size independently, showing that the latter has worse privacy guarantees than Poisson subsampling. We do not cover this sampling in our experimental study, since sampling independent batches of fixed size is not commonly implemented in practice, and $\DPSGD$ using this sampling is only expected to be worse as compared to Poisson subsampling.

\cite{yousefpour21opacus} report the model utility (and computational cost overhead) under training with $\DPSGD$ with Poisson subsampling.
However, to the best of our knowledge, there is no prior work that has compared the model utility of $\DPSGD$ under Poisson subsampling with that under shuffling,
let alone compared it against $\DPSGD$ under (Dynamic/Persistent) shuffling or studied the gaps between the privacy accounting of the two approaches.%

One possible gap between the privacy analysis of $\DPSGD$ and $\ABLQ$ is that the former only releases the final iterate, whereas the latter releases the responses to all the queries. An interesting result by \cite{annamalai2024loss} shows that in general the privacy analysis of the {\em last-iterate} of $\DPSGD$ cannot be improved over that of $\ABLQ$, when using Poisson subsampling. This suggests that at least without any further assumptions, e.g., on the loss function, it is not possible to improve the privacy analysis of $\DPSGD$ beyond that provided by $\ABLQ$; this is in contrast to the techniques of privacy amplification by iteration for convex loss functions~\citep[e.g.][]{feldman18iteration,altschuler22privacy}.

\section{Preliminaries}\label{sec:prelims}

A differentially private (DP) mechanism $\cM : \cX^* \to \Delta_{\cO}$ can be viewed as a mapping from input datasets to distributions over an output space, namely, on input {\em dataset} $\bx = (x_1, \ldots, x_n)$ where each {\em example} $x_i \in \cX$, $\cM(\bx)$ is a probability measure over the output space $\cO$; for ease of notation, we often refer to the corresponding random variable also as $\cM(\bx)$. Two datasets $\bx$ and $\bx'$ are said to be {\em adjacent}, denoted $\bx \sim \bx'$, if they ``differ in one example''; in particular, we use the ``zeroing-out'' adjacency defined shortly.
\begin{definition}[DP]\label{def:dp}
For $\eps, \delta \ge 0$, a mechanism $\cM$ satisfies \emph{$(\eps, \delta)$-$\DP$} if for all ``adjacent'' datasets $\bx \sim \bx'$, and for any (measurable) event $\event$ it holds that
$\Pr[\cM(\bx) \in \event] ~\le~ e^{\eps} \Pr[\cM(\bx') \in \event] + \delta$.
\end{definition}
For any mechanism $\cM$, we use $\delta_{\cM} : \R_{\ge 0} \to [0, 1]$ to denote its {\em privacy loss curve}, namely $\delta_\cM(\eps)$ is the smallest $\delta$ such that $\cM$ satisfies $(\eps, \delta)$-$\DP$; $\eps_{\cM} : [0, 1] \to \R_{\ge 0}$ is defined similarly.

\subsection{Adaptive Batch Linear Queries Mechanism} \label{subsec:ablq}

{
\begin{algorithm}[t]
\caption{$\ABLQB$: Adaptive Batch Linear Queries (as formalized in \cite{chua2024private})}
\label{alg:adaptive-batch-linear-queries}
\begin{algorithmic}
\PARAMETERS Batch sampler $\cB$ (samples $T$ batches), noise scale $\sigma$, and (adaptive) query method $\cA: (\R^d)^* \times \cX \to \B^d$.
\REQUIRE Dataset $\bx = (x_1, \ldots, x_n)$.
\ENSURE Query estimates $g_1, \ldots, g_T \in \R^d$
\STATE $(S_1, \ldots, S_T) \gets \cB(n)$ \hfill 
\FOR{$t = 1, \ldots, T$}
    \STATE $\psi_t(\cdot) := \cA(g_1, \ldots, g_{t-1}; \cdot)$
    \STATE $g_t \gets e_t + \sum_{i \in S_t} \psi_t(x_i) $ for $e_t \sim \cN(0, \sigma^2 I_d)$
\ENDFOR
\RETURN $(g_1, \ldots, g_T)$
\end{algorithmic}
\end{algorithm}
}

{
\newcommand{\algtextfont}{\fontsize{8.5pt}{10pt}\selectfont}
\newcommand{\highlight}[1]{\colorbox{blue!10}{\textcolor{black}{#1}}}

\begin{figure}[t]
\begin{minipage}[t]{0.48\textwidth}
\begin{algorithm}[H]
\caption{\algtextfont$\Pi_{b, T}(n; \vec{\pi})$: Permutation Batch Sampler}
\label{alg:permutation-batch}
\begin{algorithmic}\algtextfont
\PARAMETERS Batch size $b$, number of batches $T$.%
\REQUIRE Number of examples $n$ s.t. $E := bT / n$ is an integer (number of epochs), $S := n/b$ is an integer (number of steps per epoch); a list $\vec{\pi} = \pi_0, \ldots, \pi_{E-1}$, where each $\pi_e$ is a permutation of $[n]$.
\ENSURE Seq. $S_1, \ldots, S_T \subseteq [n]$ of batches.
\FOR{$e = 0, \ldots, E - 1$}
    \FOR{$s = 0, \ldots, S - 1$}
        \STATE $t = e \cdot S + s + 1$
        \STATE $S_{t} \gets \{\pi_e(sb + 1), \ldots, \pi_e(sb + b)\}$
    \ENDFOR
\ENDFOR
\RETURN $S_1, \ldots, S_T$
\end{algorithmic}
\end{algorithm}
\end{minipage}
\hspace*{1pt}
{\vrule width 0.5pt}
\hspace*{1pt}
\begin{minipage}[t]{0.52\textwidth}
\begin{algorithm}[H]
\caption{\algtextfont$\cP_{b,B,T}$: Truncated Poisson Batch Sampler}
\label{alg:poisson-batch}
\begin{algorithmic}\algtextfont
\PARAMETERS Target batch size $b$, max batch size $B$, number of batches $T$.
\REQUIRE Number of examples $n$.
\ENSURE Seq. $S_1, \ldots, S_T \subseteq [n]$ of batches, with $|S_t| \le B$.
\FOR{$t = 1, \ldots, T$}
    \STATE $S_{t} \gets \emptyset$
    \FOR{$i = 1, \ldots, n$}
        \STATE $S_{t} \gets \begin{cases}
            S_{t} \cup \{i\} & \text{ with prob. } b/n\\
            S_{t} & \text{ with prob. } 1 - b/n\\
        \end{cases}$
    \ENDFOR
    \IF{$|S_t| > B$}
        \STATE $S_t \gets$ arbitrary subset of $S_t$ of size $B$
    \ENDIF
\ENDFOR
\RETURN $S_1, \ldots, S_T$
\end{algorithmic}
\end{algorithm}
\end{minipage}
\vspace*{-3mm}
\end{figure}
}

\begin{figure}[t]
\fbox{
\begin{minipage}[t]{0.95\textwidth}
\begin{description}[leftmargin=5mm]
\item[{\boldmath Deterministic Batch Sampler $\cD_{b,T}(n)$:}] Realized as $\Pi_{b, T}(n; \mathsf{Id}, \mathsf{Id}, \ldots)$. where $\mathsf{Id}$ is the identity permutation, i.e., the data is not permuted.

\item [{\boldmath Persistent Shuffle Batch Sampler $\SP_{b,T}$:}] Realized as   
$\Pi_{b, T}(n; \pi, \pi, \ldots)$, where $\pi$ is a random permutation over $[n]$, i.e., the data is shuffled once and the order is persistent across epochs.

\item [{\boldmath Dynamic Shuffle Batch Sampler $\SD_{b,T}$:}] Realized as 
$\Pi_{b, T}(n; \pi_0, \pi_1, \ldots)$, where $\pi_e$'s are i.i.d. random permutations over $[n]$, i.e., the data is reshuffled in each epoch.
\end{description}
\end{minipage}
}
\caption{Various natural instantiations of the permutation batch sampler.}
\label{fig:D-SP-SD_as_permutation_batch}
\vspace{-3mm}
\end{figure}

Following the notation in \cite{chua2024private}, we study the adaptive batch linear queries mechanism $\ABLQB$ (\Cref{alg:adaptive-batch-linear-queries}) using a {\em batch sampler} $\cB$ and an {\em adaptive query method $\cA$}, defined.
The batch sampler $\cB$ can be any algorithm that randomly samples a sequence $S_1, \ldots, S_T$ of batches.
$\ABLQB$ operates by processing the batches in a sequential order, and produces a sequence $(g_1, \ldots, g_T)$, where the response $g_t \in \R^d$ is produced as the sum of $\psi_t(x)$ over the batch $S_t$ with added zero-mean Gaussian noise of scale $\sigma$ to all coordinates, where the query $\psi_t : \cX \to \B^d$ (for $\B^d := \{ v \in \R^d : \|v\|_2 \le 1\}$) is produced by the adaptive query method $\cA$, based on the previous responses $g_1, \ldots, g_{t-1}$.
$\DPSGD$ can be viewed as a post-processing of an adaptive query method that maps examples to the clipped gradient at the last iterate, namely $\psi_t(x) := [\nabla_{\bw} \ell(\bw_{t-1}, x)]_1$ (we treat the clipping norm $C=1$ for simplicity, as it is just a scaling term).

In this work, we consider the following  {\em multi-epoch} batch samplers: Deterministic $\cD_{b,T}$, Persistent Shuffle $\SP_{b,T}$, and Dynamic Shuffle $\SD_{b,T}$ batch sampler defined as instantiations of \Cref{alg:permutation-batch} in \Cref{fig:D-SP-SD_as_permutation_batch} and truncated Poisson $\cP_{b,B,T}$ (\Cref{alg:poisson-batch}); we drop the subscripts of each sampler whenever it is clear from context. Note that, while $\cP_{b,B,T}$ has no restriction on the value of $n$, the samplers $\cD_{b,T}$, $\SP_{b,T}$, and $\SD_{b,T}$ require that the number of examples $n$ is such that $E := bT/n$ and $S:= n/b$ are integers, where $E$ corresponds to the number of epochs and $S$ corresponds to the number of steps per epoch. We call the tuple $(n, b, T)$ as ``valid'' if that holds, and we will often implicitly assume that this holds.
Also note that $\cP_{b,B,T}$ corresponds to the standard Poisson subsampling without truncation when $B = \infty$.
We use $\deltaB(\eps)$ to denote the {\em privacy loss curve} of $\ABLQB$ for any $\cB \in \{\cD, \cP, \SP, \SD \}$, where other parameters such as $\sigma$, $T$, etc. are implicit. Namely, for all $\eps > 0$, let $\deltaB(\eps)$ be the smallest $\delta \ge 0$ such that $\ABLQB$ satisfies $(\eps, \delta)$-$\DP$ for {\em all} choices of the underlying adaptive query method $\cA$. We define $\epsB(\delta)$ similarly. Finally, we define $\sigmaB(\eps, \delta)$ as the smallest $\sigma$ such that $\ABLQB$ satisfies $(\eps, \delta)$-$\DP$, with other parameters being implicit in $\cB$.

\paragraph{Adjacency notion.}
The common notion of {\em Add-Remove} adjacency is not applicable for mechanisms such as $\ABLQD$, $\ABLQSP$, $\ABLQSD$ because these methods require that $b T / n$ and $n/ b$ are integers, and changing $n$ by $\pm 1$ does not respect this requirement. And while the other common notion of {\em Substitution} is applicable for all the mechanisms we consider, the standard analysis for $\ABLQP$ is done w.r.t Add-Remove adjacency~\citep{abadi16deep,mironov17renyi}. Therefore, we use the ``Zeroing-out'' adjacency introduced by \cite{kairouz21practical}, namely we consider the augmented input space $\cX_{\bot} := \cX \cup \{\bot\}$ where any adaptive query method $\cA$ is extended as $\cA(g_1, \ldots, g_t; \bot) := \mathbf{0}$ for all $g_1, \ldots, g_t \in \R^d$. Datasets $\bx, \bx' \in \cX_{\bot}^n$ are said to be {\em zero-out} adjacent if there exists $i$ such that $\bx_{-i} = \bx'_{-i}$, and exactly one of $\{x_i, x_i'\}$ is in $\cX$ and the other is $\bot$. We use $\bx \to_z \bx'$  to specifically denote adjacent datasets with $x_i \in \cX$ and $x'_i = \bot$. Thus $\bx \sim \bx'$ if either $\bx \to_z \bx'$ or $\bx' \to_z \bx$.

\subsection{Dominating Pairs}\label{subsec:pld}
For two probability density functions $P$ and $Q$ and $\alpha, \beta \in \R_{\ge 0}$, we use $\alpha P + \beta Q$ to denote the weighted sum of the density functions. We use $P \otimes Q$ to denote the product distribution sampled as $(\omega_1, \omega_2)$ for $\omega_1 \sim P$, $\omega_2 \sim Q$, and, $P^{\otimes T}$ to denote the $T$-fold product distribution $P \otimes \cdots \otimes P$.
For all $\eps \in \R$, the {\em $e^\eps$-hockey stick divergence} between $P$ and $Q$ is
$\Deps(P \| Q) := \sup_{\event} P(\event) - e^{\eps} Q(\event)$.
Thus, by definition a mechanism $\cM$ satisfies $(\eps, \delta)$-$\DP$ iff for all adjacent $\bx \sim \bx'$, it holds that $\Deps(\cM(\bx) \| \cM(\bx')) \le \delta$.

\begin{definition}[Dominating Pair~\citep{zhu22optimal}]\label{def:dominating-pair}
The pair $(P, Q)$ {\em dominates} the pair $(A, B)$ (denoted $(P, Q) \dominates (A, B)$) if $\Deps(P \| Q) ~\ge~ \Deps(A \| B)$ holds for all $\eps \in \R$.
We say that $(P, Q)$ {\em dominates} a mechanism $\cM$ (denoted $(P, Q) \dominates \cM$) if $(P, Q) \dominates (\cM(\bx), \cM(\bx'))$ for all adjacent $\bx \to_z \bx'$.
\end{definition}
If $(P, Q) \dominates \cM$, then for all $\eps \ge 0$, it holds that $\delta_{\cM}(\eps) \le \max \{ D_{e^\eps}(P\|Q), \Deps(Q \| P) \}$, and conversely, if there exists adjacent datasets $\bx \to_z \bx'$ such that $(\cM(\bx), \cM(\bx')) \dominates (P, Q)$, then $\delta_{\cM}(\eps) \ge \max \{ D_{e^\eps}(P\|Q), \Deps(Q \| P) \}$. When both of these hold, we say that $(P, Q)$ {\em tightly dominates} the mechanism $\cM$ (denoted $(P, Q) \tdominates \cM$)
and in this case it holds that $\delta_{\cM}(\eps) = \max\{ D_{e^\eps}(P\|Q), \Deps(Q \| P) \}$.
Thus, tightly dominating pairs completely characterize the privacy loss of a mechanism (although they are not guaranteed to exist for all mechanisms).
Dominating pairs behave nicely under mechanism compositions: if $(P_1, Q_1) \dominates \cM_1$ and $(P_2, Q_2) \dominates \cM_2$, then $(P_1 \otimes P_2, Q_1 \otimes Q_2) \dominates \cM_1 \circ \cM_2$, where $\cM_1 \circ \cM_2$ denotes the (adaptively) composed mechanism.

\section{\boldmath Privacy analysis of {\em multi-epoch} \texorpdfstring{$\ABLQB$}{ABLQ\_B}}\label{sec:multi-epoch-analysis}

We discuss the privacy analysis of $\ABLQB$ for $\cB \in \{\cD, \cP, \SP, \SD\}$ via dominating pairs.

\paragraph{\boldmath Privacy analysis for $\ABLQD$.}
A single epoch of the $\ABLQD$ mechanism corresponds to a Gaussian mechanism with noise scale $\sigma$. And thus, $E := bT/n$ epochs of the $\ABLQD$ mechanism corresponds to an $E$-fold composition of the Gaussian mechanism, which is privacy-wise equivalent to a Gaussian mechanism with noise scale $\sigma / \sqrt{E}$~\cite[Corollary 3.3]{dong19gaussian}. Thus, a closed-form expression for $\deltaD(\eps)$ exists via the dominating pair $(\PD = \cN(1, \frac{\sigma^2}E), \QD = \cN(0, \frac{\sigma^2}E))$.

\begin{theorem}[{\citet[Theorem 8]{balle18improving}}]\label{thm:D-hockey}
For all $\sigma > 0$, $\eps \ge 0$, and valid $n$, $b$, $T$, it holds that
\[
\textstyle\deltaD(\eps) = \Ncdf\prn{- \sigma' \eps + \frac{1}{2\sigma'}} - e^{\eps} \Ncdf\prn{- \sigma' \eps - \frac{1}{2\sigma'}}, \qquad \text{where } \sigma' = \frac{\sigma}{\sqrt{E}},
\]
and $\Ncdf(\cdot)$ is the cumulative density function (CDF) of the standard normal random variable $\cN(0, 1)$.
\end{theorem}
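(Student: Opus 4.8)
The plan is to reduce the statement to a one-line computation of the $e^\eps$-hockey stick divergence between two univariate Gaussians of equal variance, leveraging the dominating pair already identified just before the statement. The composition argument above shows that $(\PD, \QD) \tdominates \ABLQD$, where $\PD = \cN(1, \sigma'^2)$, $\QD = \cN(0, \sigma'^2)$, and $\sigma'^2 = \sigma^2/E$ is the common variance. By the tight-domination characterization recalled in \Cref{subsec:pld}, we then have $\deltaD(\eps) = \max\{\Deps(\PD \| \QD), \Deps(\QD \| \PD)\}$, so it suffices to evaluate these two divergences in closed form.

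First I would collapse the maximum by symmetry: the affine reflection $x \mapsto 1 - x$ is a measure-preserving bijection of $\R$ that interchanges $\PD$ and $\QD$, and since $\Deps$ is invariant under applying a common bijection to both of its arguments, $\Deps(\PD \| \QD) = \Deps(\QD \| \PD)$. Thus the maximum equals $\Deps(\PD \| \QD)$, and only one divergence needs to be computed. Next I would use the fact that the supremum defining $\Deps(\PD \| \QD)$ is attained on the likelihood-ratio superlevel set $\Gamma^\star = \{ x : p(x)/q(x) > e^\eps \}$, where $p, q$ are the densities of $\PD, \QD$. For equal-variance Gaussians the log-likelihood ratio $\log(p(x)/q(x)) = (2x - 1)/(2\sigma'^2)$ is affine and increasing, so $\Gamma^\star$ is the half-line $\{ x > x_0 \}$ with threshold $x_0 = \tfrac12 + \sigma'^2 \eps$. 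It then remains to evaluate $\Deps(\PD \| \QD) = \PD(\Gamma^\star) - e^\eps \QD(\Gamma^\star)$ by standardizing the two Gaussian tails: writing $(1 - x_0)/\sigma' = -\sigma'\eps + \tfrac1{2\sigma'}$ and $-x_0/\sigma' = -\sigma'\eps - \tfrac1{2\sigma'}$ and using $\Pr[\cN(0,1) > z] = \Ncdf(-z)$ yields exactly the claimed formula.

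This computation is entirely routine, and I expect no genuine obstacle in it; the only points demanding care are verifying that the optimal event is precisely the half-line (immediate from the monotone affine likelihood ratio, so there is a single threshold and no interior cases) and bookkeeping the signs when standardizing so that both CDF arguments share the $-\sigma'\eps$ term with opposite $\pm \tfrac1{2\sigma'}$ offsets. The one substantive ingredient---that the $E$-epoch $\ABLQD$ mechanism (with $E = bT/n$ a positive integer by validity of $(n,b,T)$) is tightly dominated by this single pair of scalar Gaussians of variance $\sigma^2/E$---is already supplied by the composition reduction stated before the theorem (each of the $E$ per-epoch responses in which a zeroed-out example participates is an independent Gaussian mechanism of sensitivity at most $1$, and their composition collapses to noise scale $\sigma/\sqrt{E}$), so no further work is needed there.
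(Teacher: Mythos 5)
Your proposal is correct and follows essentially the same route as the paper: reduce the $E$-epoch deterministic mechanism to the tightly dominating pair $\prn{\cN(1, \sigma^2/E),\, \cN(0, \sigma^2/E)}$ via composition, then read off the closed-form hockey-stick divergence. The only difference is that the paper outsources that final closed form to the cited result of Balle and Wang (Theorem~8), whereas you rederive it directly (correctly) from the monotone affine likelihood ratio and the symmetry $x \mapsto 1-x$.
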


\paragraph{\boldmath Privacy analysis of $\ABLQP$.}
First, let us consider the case of Poisson subsampling without truncation, namely $B = \infty$.
\citet{zhu22optimal} showed\footnote{Also implicit in prior work~\citep{koskela2020computing}.} that the tightly dominating pair for a single step of $\ABLQP$, a Poisson sub-sampled Gaussian mechanism, is given by the pair $(U = (1 - q) \cN(0, \sigma^2) + q \cN(1, \sigma^2), V = \cN(0, \sigma^2))$, where $q$ is the sub-sampling probability of each example, namely $q = b/n$. Since $\ABLQP$ is a $T$-fold composition of this Poisson subsampled Gaussian mechanism, it follows that $(\PP := U^{\otimes T}, \QP := V^{\otimes T}) \tdominates \ABLQP$.

A finite value of $B$ however changes the mechanism slightly. In order to handle this, we use the following proposition, where $\dTV(P, P')$ denotes the \emph{statistical distance} between $P$ and $P'$.
\begin{proposition}\label{prop:dp-triangle-ineq}
For distributions $P, P', Q, Q'$ such that $\dTV(P, P'), \dTV(Q, Q') \le \eta$, and $\Deps(P' \| Q') \le \delta$, then $\Deps(P \| Q) \le \delta + \eta (1 + e^{\eps})$.
\end{proposition}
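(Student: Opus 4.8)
The plan is to reduce the claim to a pointwise (per-event) inequality by unfolding the definition $\Deps(P \| Q) = \sup_{\event} \prn{P(\event) - e^{\eps} Q(\event)}$ and comparing, event by event, against the corresponding quantity for the primed pair $(P', Q')$. The only external fact I need is the variational characterization of total variation distance: for any two distributions $R, R'$ with $\dTV(R, R') \le \eta$ and any measurable event $\event$, one has $\lvert R(\event) - R'(\event) \rvert \le \eta$. I would invoke this once for $(P, P')$ and once for $(Q, Q')$.

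Concretely, I would fix an arbitrary measurable event $\event$ and add and subtract the primed terms, writing
\[
P(\event) - e^{\eps} Q(\event) = \prn{P'(\event) - e^{\eps} Q'(\event)} + \prn{P(\event) - P'(\event)} + e^{\eps}\prn{Q'(\event) - Q(\event)}.
\]
The first bracket is at most $\Deps(P' \| Q') \le \delta$ by hypothesis. For the remaining two terms, the total-variation bounds give $P(\event) - P'(\event) \le \eta$ and $Q'(\event) - Q(\event) \le \eta$; since $e^{\eps} \ge 0$, the third term is at most $e^{\eps}\eta$. Summing yields $P(\event) - e^{\eps}Q(\event) \le \delta + \eta(1 + e^{\eps})$. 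As $\event$ was arbitrary, taking the supremum over all events on the left-hand side gives $\Deps(P \| Q) \le \delta + \eta(1 + e^{\eps})$, as desired.

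There is no genuine obstacle here; the argument is a single decomposition followed by a triangle-type estimate. The only point requiring the slightest care is the bookkeeping with signs and the $e^{\eps}$ factor: the $Q$-difference enters with a negative sign multiplied by $e^{\eps}$, so it must be controlled by $Q'(\event) - Q(\event) \le \eta$ (rather than the opposite direction), which is precisely what the symmetric definition of $\dTV$ supplies. This asymmetry in how the two differences are used is exactly what produces the coefficient $(1 + e^{\eps})$ on $\eta$ rather than, say, $2\eta$.
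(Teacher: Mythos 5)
Your proof is correct and is essentially the same argument as the paper's: the paper chains the three bounds as successive inequalities on $P(\event)$, while you write the identical decomposition as an explicit sum of three terms before bounding each one. No substantive difference.
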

\begin{proof}
For any event $\event$ we have that
\[
P(\event)
\overset{\text{(i)}}{~\le~} P'(\event) + \eta
\overset{\text{(ii)}}{~\le~} e^\eps Q'(\event) + \delta + \eta
\overset{\text{(iii)}}{~\le~} e^{\eps} (Q(\event) + \eta) + \delta + \eta
~=~ e^{\eps} Q(\event) + \delta + \eta (1 + e^{\eps}),
\]
where (i) follows from $\dTV(P, P') \le \eta$, (ii) follows from $\Deps(P' \| Q') \le \delta$ and (iii) follows from $\dTV(Q, Q') \le \eta$. Thus, we get that $\Deps(P \| Q) \le \delta + \eta (1 + e^\eps)$.
\end{proof}

The batch size $|S_t|$ before truncation in $\cP_{b, B, T}$ is distributed as the binomial distribution $\mathrm{Bin}(n, b/n)$, and thus, by a union bound over the events that the sampled batch size $|S_t| > B$ at any step, it follows that for any input dataset $\bx$, 
\[\dTV(\ABLQ_{\cP_{b,B,T}}(\bx), \ABLQ_{\cP_{b,\infty,T}}(\bx)) \leq T \cdot \Psi(n, b, B),
\]
where $\Psi(n, b, B) := \Pr_{r \sim \mathrm{Bin}(n, b/n)}[r > B]$. Applying \Cref{prop:dp-triangle-ineq} we get
\begin{theorem}\label{thm:P-hockey}
For all $\sigma > 0$, $\eps \ge 0$, and integers $b$, $n \ge b$, $B \ge b$, $T$, it holds that
\[
\deltaP(\eps) \le \max\{\Deps(\PP \| \QP), \Deps(\QP \| \PP)\} + T \cdot (1 + e^{\eps}) \cdot \Psi(n, b, B).
\]
\end{theorem}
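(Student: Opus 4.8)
The plan is to bound the privacy loss of the truncated mechanism by that of the untruncated one plus a correction term, exploiting that the two mechanisms are close in total variation. All three needed ingredients are already in place: the total-variation bound $\dTV(\ABLQ_{\cP_{b,B,T}}(\bx), \ABLQ_{\cP_{b,\infty,T}}(\bx)) \le T \cdot \Psi(n,b,B)$, which holds \emph{uniformly over all input datasets} $\bx$; the tightly dominating pair $(\PP, \QP) \tdominates \ABLQ_{\cP_{b,\infty,T}}$, which gives the exact privacy loss curve $\delta_{\ABLQ_{\cP_{b,\infty,T}}}(\eps) = \max\{\Deps(\PP\|\QP), \Deps(\QP\|\PP)\}$ of the untruncated mechanism; and the approximate triangle inequality for the hockey-stick divergence in \Cref{prop:dp-triangle-ineq}.

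I would fix an arbitrary adaptive query method $\cA$ and an arbitrary ordered adjacent pair $\bx \sim \bx'$ (recalling that $\deltaP(\eps)$ is the supremum of $\Deps(\ABLQ_{\cP_{b,B,T}}(\bx) \| \ABLQ_{\cP_{b,B,T}}(\bx'))$ over all such pairs and all $\cA$), and set $\eta := T \cdot \Psi(n,b,B)$. Write $P := \ABLQ_{\cP_{b,B,T}}(\bx)$ and $Q := \ABLQ_{\cP_{b,B,T}}(\bx')$ for the truncated outputs, and $P' := \ABLQ_{\cP_{b,\infty,T}}(\bx)$, $Q' := \ABLQ_{\cP_{b,\infty,T}}(\bx')$ for the untruncated ones (all under the same $\cA$). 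The uniform TV bound, applied to each of the two datasets, gives $\dTV(P, P') \le \eta$ and $\dTV(Q, Q') \le \eta$. Crucially, because $\delta_{\ABLQ_{\cP_{b,\infty,T}}}(\eps)$ equals $\max\{\Deps(\PP\|\QP), \Deps(\QP\|\PP)\}$, we have $\Deps(P' \| Q') \le \max\{\Deps(\PP\|\QP), \Deps(\QP\|\PP)\}$ for this (and every) ordered adjacent pair.

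Feeding these into \Cref{prop:dp-triangle-ineq} with $\delta = \max\{\Deps(\PP\|\QP), \Deps(\QP\|\PP)\}$ immediately yields $\Deps(P \| Q) \le \max\{\Deps(\PP\|\QP), \Deps(\QP\|\PP)\} + \eta(1 + e^\eps)$. Since this holds for every ordered adjacent pair and every $\cA$, taking the supremum gives exactly the claimed bound on $\deltaP(\eps)$, with the $\max$ inherited directly from the exact privacy loss of the untruncated mechanism rather than needing to be argued separately in each divergence direction.

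I do not expect a genuine obstacle: the substantive work sits in facts I am permitted to assume, namely the coupling behind the TV bound (a union bound over the $T$ events $\{|S_t| > B\}$, each of probability $\Psi(n,b,B)$ since the pre-truncation batch size is $\mathrm{Bin}(n, b/n)$) and \citet{zhu22optimal}'s dominating-pair characterization of the untruncated subsampled Gaussian. The only points demanding care are that the TV bound must be invoked for \emph{both} members of the adjacent pair, so that its uniformity over inputs is essential, and that $P', Q'$ must be defined with the \emph{same} adaptive query method $\cA$ as $P, Q$, so that the only discrepancy between the truncated and untruncated processes is the batch sampler, which is precisely what the TV bound controls.
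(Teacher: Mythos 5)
Your proposal is correct and follows the paper's own route exactly: the uniform total-variation bound $\dTV(\ABLQ_{\cP_{b,B,T}}(\bx), \ABLQ_{\cP_{b,\infty,T}}(\bx)) \le T\cdot\Psi(n,b,B)$ via a union bound over the $T$ truncation events, combined with the tightly dominating pair $(\PP,\QP)$ for the untruncated mechanism and \Cref{prop:dp-triangle-ineq}. The paper states this more tersely, but your care about applying the TV bound to both members of the adjacent pair and keeping the same $\cA$ is exactly the implicit content of its one-line derivation.
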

While the hockey stick divergences $\Deps(\PP \| \QP)$ and $\Deps(\QP \| \PP)$ do not have closed-form expressions, upper bounds on these can be obtained using privacy accountants based on the methods of R\'enyi DP (RDP)~\citep{mironov17renyi} and {\em privacy loss distributions (PLD)}~\citep{meiser2018tight, sommer2019privacy}; the latter admits numerically accurate algorithms~\citep{koskela2020computing,gopi21numerical,ghazi22faster,doroshenko22connect}, with multiple open-source implementations~\citep{DPBayes,GoogleDP,MicrosoftDP}.

Note that $\Psi(n, b, B)$ can be made arbitrarily small by increasing $B$, which affects the computation cost. In particular, given a target privacy parameter $(\eps, \delta)$, we can, for example, work backwards to first choose $B$ such that $\Psi(n, b, B) \cdot T \cdot (1 + e^{\eps}) \le 10^{-5} \cdot \delta$, and then choose the noise scale $\sigma$ such that $\max\{\Deps(\PP \| \QP), \Deps(\QP \| \PP)\} \le (1 - 10^{-5}) \cdot \delta$, using aforementioned privacy accounting libraries.  Notice that our use of \Cref{prop:dp-triangle-ineq} is likely not the optimal approach to account for the batch truncation.  We do not  optimize this further because we find that this approach already provides very minimal degradation to the choice of $\sigma$ for a modest value of $B$ relative to $b$.  A more careful analysis could at best result in a slightly smaller $B$, which we do not consider as significant; see  \Cref{fig:criteo-pctr-vary-bs,fig:criteo-pctr-vary-eps} for more details.

\paragraph{\boldmath Privacy analysis of $\ABLQSP$.}
Obtaining the exact privacy guarantee for $\ABLQS$ has been an open problem in the literature. Our starting point is the approach introduced by \cite{chua2024private} to prove a lower bound in the single epoch setting. Let the input space be $\cX = [-1, 1]$, the (non-adaptive) query method $\cA$ that produces the query $\psi_t(x) = x$, and consider the adjacent datasets:
\[
\bx = (x_1=-1, \ldots, x_{n-1}=-1, x_n=1) \quad \text{ and } \quad \bx' = (x_1=-1, \ldots, x_{n-1}=-1, x_n=\bot)\,.
\]
Recall that the number of epochs is $E := bT/n$ and the number of steps per epoch is $S := n/b$. By considering the same setting, it is easy to see that the distributions $U = \ABLQSP(\bx)$ and $V = \ABLQSP(\bx')$ are given as:
\[
U ~=~\textstyle \sum_{s=1}^{S} \frac{1}{S} \cdot \cN(-b \cdot \mathbf{1} + 2 f_s, \sigma^2 I_T), \quad \text{ and } \quad
V ~=~\textstyle \sum_{s=1}^{S} \frac{1}{S} \cdot \cN(-b \cdot \mathbf{1} + f_s, \sigma^2 I_T),
\]
where $f_s \in \R^T$ is the sum of basis vectors $\sum_{\ell=0}^{E-1} e_{\ell S + s}$, and $\mathbf{1}$ denotes the all-$1$'s vector in $\R^T$. Basically, $f_s$ is the indicator vector encoding the batches that the differing example gets assigned to; in persistent shuffling, an example gets assigned to the $s$th batch within each epoch for a random $s \in \{1, \ldots, S\}$.
Shifting the distributions by $b \cdot \mathbf{1}$ and projecting to the span of $\{f_s : s \in [S]\}$ does not change the hockey stick divergence $D_{e^{\eps}}(U \| V)$, hence we might as well consider the pair
\[
U' :=~\textstyle \sum_{s=1}^S \frac{1}{S} \cdot \cN(2 \sqrt{E} e_s, \sigma^2 I_S) \quad \text{and} \quad
V' :=~\textstyle \sum_{s=1}^S \frac{1}{S} \cdot \cN(\sqrt{E} e_s, \sigma^2 I_S).
\]
By scaling down the distributions by $\sqrt{E}$ on all coordinates we arrive at the following pair:
\begin{align}
\PSP :=~\textstyle \sum_{s=1}^S \frac{1}{S} \cdot \cN(2 e_s, \frac{\sigma^2}{E} I) \quad \text{and} \quad
\QSP :=~\textstyle \sum_{s=1}^S \frac{1}{S} \cdot \cN(e_s, \frac{\sigma^2}{E} I).
\label{eq:shuffle-persistent-upper-lower}
\end{align}
The pair $(\PSP, \QSP)$ is essentially same as the pair obtained by \cite{chua2024private}, with $\sigma$ replaced by $\sigma / \sqrt{E}$, and we get the following:
\begin{proposition}\label{prop:SP-hockey-lower}
For all $\sigma > 0$, $\eps \ge 0$ and all valid $n$, $b$, $T$, it holds that
\[
\deltaSP(\eps) \ge \max\{D_{e^{\eps}}(\PSP \| \QSP), D_{e^{\eps}}(\QSP \| \PSP)\}.
\]
\end{proposition}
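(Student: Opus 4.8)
The plan is to lower-bound $\deltaSP(\eps)$ by exhibiting a single concrete adaptive query method together with a single adjacent pair of datasets, and to compute the resulting hockey stick divergence exactly. Since $\deltaSP(\eps)$ is by definition the smallest $\delta$ for which $\ABLQSP$ is $(\eps,\delta)$-$\DP$ for \emph{all} query methods $\cA$ — equivalently, the supremum of $\Deps(\ABLQSP(\bx)\|\ABLQSP(\bx'))$ over all $\cA$ and all adjacent $\bx \sim \bx'$ — any such construction yields a valid lower bound. I would use $\cX = [-1,1]$, the non-adaptive query $\psi_t(x) = x$, and the pair $\bx \to_z \bx'$ whose coordinates are all $-1$ except the last, which is $1$ in $\bx$ and $\bot$ in $\bx'$, exactly as set up before the statement.

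First I would derive the output distributions $U = \ABLQSP(\bx)$ and $V = \ABLQSP(\bx')$. The structural fact driving everything is that under persistent shuffling a single random permutation assigns the differing example to the same within-epoch slot $s \in [S]$ in \emph{every} epoch, so conditioned on $s$ this example lands precisely in the batches indexed by the support of $f_s = \sum_{\ell=0}^{E-1} e_{\ell S + s}$. A batch not containing it contributes $-b$ to its coordinate of the output mean; the batch containing it contributes $-b+2$ under $\bx$ and $-b+1$ under $\bx'$; and $s$ is uniform over $[S]$. This yields the mixtures $U = \sum_{s=1}^S \frac{1}{S}\,\cN(-b\mathbf{1} + 2f_s, \sigma^2 I_T)$ and $V = \sum_{s=1}^S \frac{1}{S}\,\cN(-b\mathbf{1} + f_s, \sigma^2 I_T)$.

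Next I would reduce $(U,V)$ to $(\PSP,\QSP)$ through a chain of divergence-preserving maps, using that $\Deps$ is invariant under applying any common bijection to both arguments. (i) Translate both by $b\mathbf{1}$ to remove the common offset. (ii) Project onto $W := \mathrm{span}\{f_s : s \in [S]\}$: because the $f_s$ have pairwise disjoint supports they are orthogonal with $\|f_s\|^2 = E$, the mean shifts $2f_s$ and $f_s$ lie in $W$, and the isotropic noise factorizes as an independent product over $W$ and $W^\perp$ whose $W^\perp$-marginal is $\cN(0,\sigma^2 I)$ in both $U$ and $V$; hence marginalizing out $W^\perp$ preserves $\Deps$, and writing the $W$-component in the orthonormal basis $\{f_s/\sqrt{E}\}$ turns the means into $2\sqrt{E}\,e_s$ and $\sqrt{E}\,e_s$ with noise $\sigma^2 I_S$. (iii) Scale all coordinates by $1/\sqrt{E}$, a bijection, yielding exactly $\PSP = \sum_{s=1}^S \frac{1}{S}\,\cN(2e_s, \tfrac{\sigma^2}{E} I)$ and $\QSP = \sum_{s=1}^S \frac{1}{S}\,\cN(e_s, \tfrac{\sigma^2}{E} I)$. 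Thus $\Deps(U\|V) = \Deps(\PSP\|\QSP)$, and applying the identical chain to $(V,U)$ gives $\Deps(V\|U) = \Deps(\QSP\|\PSP)$. Because zero-out adjacency is symmetric, $(\eps,\delta)$-$\DP$ for this instance forces $\delta \ge \max\{\Deps(U\|V), \Deps(V\|U)\}$, which is the claimed bound.

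The main obstacle I anticipate is justifying step (ii) rigorously: one must argue that projecting away $W^\perp$ leaves the divergence \emph{unchanged} rather than merely not increasing it, which hinges on the disjoint-support orthogonality of the $f_s$ and the rotational invariance of the isotropic Gaussian, so that the $W^\perp$ factor is literally identical in $U$ and $V$ and independent of the $W$-directions. Correctly extracting the mixture form of $U$ and $V$ from the persistent-shuffling dynamics — in particular, that the slot $s$ is shared across all epochs, giving the multi-hot vector $f_s$ rather than $E$ independent one-hot vectors — is the other place where care is needed; the remaining translation and scaling reductions are routine.
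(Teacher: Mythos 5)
Your proposal is correct and follows essentially the same route as the paper: the same adjacent pair and non-adaptive query $\psi_t(x)=x$, the same mixture representation of $\ABLQSP(\bx)$ and $\ABLQSP(\bx')$ via the multi-hot vectors $f_s$, and the same translate--project--rescale reduction to $(\PSP,\QSP)$. Your additional care in justifying that projecting away the orthogonal complement of $\mathrm{span}\{f_s\}$ preserves (rather than merely does not increase) the hockey stick divergence is a detail the paper glosses over, but it is the same argument.
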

Following \cite{chua2024private}, we can obtain a lower bound as $\deltaSP(\eps) \ge \PSP(\event) - e^{\eps} \QSP(\event)$ for any $\event$, and in particular, we consider events of the form $\event_C := \{ w \in \R^S : \max_s w_s > C\}$ for various values of $C$. $\PSP(\event_C)$ and $\QSP(\event_C)$ are efficient to compute as
\[
\PSP(\event_C)\textstyle = 1 - \Phi\prn{\frac{C-2}{\sigma / \sqrt{E}}} \cdot \Phi\prn{\frac{C}{\sigma / \sqrt{E}}}^{S-1} \quad \text{and} \quad
\QSP(\event_C)\textstyle = 1 - \Phi\prn{\frac{C-1}{\sigma / \sqrt{E}}} \cdot \Phi\prn{\frac{C}{\sigma / \sqrt{E}}}^{S-1}.
\]
Thus, using \Cref{prop:SP-hockey-lower}, we get that
\begin{theorem}\label{thm:SP-hockey-lower}
For all $\sigma > 0$, $\eps \ge 0$, and all valid $n$, $b$, $T$, it holds that
\[
\deltaSP(\eps) ~\ge~ \sup_{C \in \R} \PSP(\event_C) - \QSP(\event_C).
\]
\end{theorem}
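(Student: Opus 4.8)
The plan is to read the claim as a direct specialization of \Cref{prop:SP-hockey-lower}, followed by an explicit probability computation and a one-parameter optimization. First I would invoke \Cref{prop:SP-hockey-lower} to replace the intractable exact privacy curve $\deltaSP(\eps)$ by the explicit dominating pair $(\PSP, \QSP)$ of \eqref{eq:shuffle-persistent-upper-lower}, reducing the task to lower-bounding a divergence between two explicit Gaussian mixtures on $\R^S$. Since such a lower bound only requires a single witnessing event, I would commit to the threshold family $\event_C := \{w \in \R^S : \max_s w_s > C\}$ and aim to establish the per-event bound $\deltaSP(\eps) \ge \PSP(\event_C) - \QSP(\event_C)$ for each fixed $C$; taking the supremum over $C \in \R$ then yields the statement. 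The threshold events are the natural witnesses here because $\PSP$ and $\QSP$ differ only through a single shifted coordinate within each mixture component, so the region on which $\PSP$ concentrates more mass than $\QSP$ is exactly a ``some coordinate is large'' region.

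The technical core is the closed-form evaluation of $\PSP(\event_C)$ and $\QSP(\event_C)$, which I would obtain from the product structure of the mixture components. Writing $\tau := \sigma/\sqrt{E}$, each of $\PSP$ and $\QSP$ is a uniform average over $s \in [S]$ of a product Gaussian whose $s$-th coordinate has mean $2$ (for $\PSP$) or $1$ (for $\QSP$), whose other $S-1$ coordinates have mean $0$, and whose coordinates all share variance $\tau^2$. Because the complement $\event_C^{\,c} = \{\max_s w_s \le C\}$ factorizes over coordinates, conditioning on the component index $s$ turns $\PSP(\event_C^{\,c})$ into $\Phi\prn{\tfrac{C-2}{\tau}} \cdot \Phi\prn{\tfrac{C}{\tau}}^{S-1}$ and $\QSP(\event_C^{\,c})$ into $\Phi\prn{\tfrac{C-1}{\tau}} \cdot \Phi\prn{\tfrac{C}{\tau}}^{S-1}$, where the symmetry of the uniform mixing over $s$ makes the weights $1/S$ cancel so that only the single shifted factor (namely $C-2$ versus $C-1$) distinguishes the two. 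Subtracting then gives $\PSP(\event_C) - \QSP(\event_C) = \big(\Phi\prn{\tfrac{C-1}{\tau}} - \Phi\prn{\tfrac{C-2}{\tau}}\big) \cdot \Phi\prn{\tfrac{C}{\tau}}^{S-1}$, a scalar expression I would then optimize over $C$.

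I expect the only genuine subtlety to lie in this factorization step rather than in the optimization. Concretely, the care is in checking that within each fixed Gaussian component the coordinates are independent, so that the complement probability is a clean product of one-dimensional CDFs, and that both mixtures produce the identical factor $\Phi\prn{\tfrac{C}{\tau}}^{S-1}$, which is precisely what lets it be pulled out of the difference. No tightness or optimality argument for the event family is needed: since the target is a supremum of valid per-event lower bounds, any fixed $\event_C$ already furnishes a legitimate lower bound on $\deltaSP(\eps)$, so restricting attention to the threshold family is automatically sound and loses nothing for the stated inequality. Taking the supremum over $C \in \R$ then delivers $\deltaSP(\eps) \ge \sup_{C \in \R} \PSP(\event_C) - \QSP(\event_C)$, as claimed.
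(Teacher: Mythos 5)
Your route matches the paper's: reduce via \Cref{prop:SP-hockey-lower} to the explicit pair $(\PSP, \QSP)$, witness the divergence with the threshold events $\event_C$, and evaluate the resulting probabilities in closed form via the per-component product structure. That computation is correct and is exactly the paper's: with $\tau := \sigma/\sqrt{E}$, each mixture component is a product Gaussian, the uniform mixing weights collapse by symmetry, and one gets $\PSP(\event_C) = 1 - \Phi\prn{\frac{C-2}{\tau}}\Phi\prn{\frac{C}{\tau}}^{S-1}$ and $\QSP(\event_C) = 1 - \Phi\prn{\frac{C-1}{\tau}}\Phi\prn{\frac{C}{\tau}}^{S-1}$.

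The genuine gap is the step you declare ``automatically sound'': the assertion that each fixed $\event_C$ yields the unweighted per-event bound $\deltaSP(\eps) \ge \PSP(\event_C) - \QSP(\event_C)$. What \Cref{prop:SP-hockey-lower} actually gives is $\deltaSP(\eps) \ge \Deps(\PSP \| \QSP) \ge \PSP(\event_C) - e^{\eps}\,\QSP(\event_C)$: the hockey-stick divergence weights the subtracted term by $e^{\eps}$, and the paper's own derivation (the sentence immediately preceding the theorem) states precisely this weighted form. The unweighted difference lower bounds only $\deltaSP(0)$, and the inequality cannot hold as displayed for all $\eps \ge 0$: $\sup_{C} \PSP(\event_C) - \QSP(\event_C)$ is a strictly positive constant independent of $\eps$, whereas $\deltaSP(\eps) \le \deltaD(\eps) \to 0$ as $\eps \to \infty$ (since $\ABLQSP$ is a mixture over permutations of $\ABLQD$ and the hockey-stick divergence is jointly convex, with $\deltaD$ given in closed form by \Cref{thm:D-hockey}). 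In other words, the displayed theorem drops an $e^{\eps}$ factor, and your argument inherits that omission at exactly the load-bearing step. A correct blind proof had to either restore the factor --- after which your derivation goes through verbatim with the same closed forms, yielding $\deltaSP(\eps) \ge \sup_{C \in \R} \PSP(\event_C) - e^{\eps}\QSP(\event_C)$ --- or flag the unweighted claim as unprovable; instead you certified it with an invalid ``any event furnishes a legitimate lower bound'' appeal, which is true only under the $e^{\eps}$ weighting.
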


\paragraph{\boldmath Privacy analysis of $\ABLQSD$.}

Our starting point for providing a lower bound on $\deltaSD(\eps)$ is the pair $(\PS, \QS)$ as defined below that provides a lower bound in the case of a single epoch.
\[
\PS :=~\textstyle \sum_{s=1}^S \frac{1}{S} \cdot \cN(2 e_s, \sigma^2 I)\quad \text{and} \quad
\QS :=~\textstyle \sum_{s=1}^S \frac{1}{S} \cdot \cN(e_s, \sigma^2 I).
\]
$\ABLQSD$ is an $E$-fold composition of the single-epoch mechanism. Hence by composition of dominating pairs, it follows that $\deltaSD(\eps) \ge \max\{\Deps(\PSD \| \QSD), \Deps(\QSD \| \PSD)\}$, where $\PSD ~:=~ \PS^{\otimes E}$ and $\QSD ~:=~ \QS^{\otimes E}$.
However, it is tricky to directly identify an event $\Gamma$ for which the lower bound $\PSD(\event) - e^{\eps} \QSD(\event)$ is non-trivial and $\PSD(\event)$, $\QSD(\event)$ are easy to compute.
So in order to lower bound $\Deps(\PSD \| \QSD)$, below we construct a pair $(\tPS, \tQS)$ of discrete distributions such that $(\PS, \QS) \dominates (\tPS, \tQS)$ and thus 
$(\PSD, \QSD) \dominates (\tPS^{\otimes E}, \tQS^{\otimes E})$. 

For probability measures $P$ and $Q$ over a measurable space $\Omega$, and a finite partition%
\footnote{$G_i$'s are pairwise disjoint and $\cup_{i=1}^m G_i = \Omega$.}
 $\cG = (G_1, \ldots, G_m)$ of $\Omega$, we can consider the discrete distributions $P^{\cG}$ and $Q^{\cG}$ defined over $\{1, \ldots, m\}$ such that $P^{\cG}(i) = P(G_i)$ and $Q^{\cG}(i) = Q(G_i)$. The post-processing property of DP implies:

\begin{proposition}[DP Post-processing~\citep{dwork14algorithmic}]\label{prop:post-processing}
For all partitions $\cE$ of $\Omega$, it holds that $(P, Q) \dominates (P^{\cE}, Q^{\cE})$.
\end{proposition}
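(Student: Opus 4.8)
The plan is to reduce the claim to the definition of domination and then exhibit, for each event in the discrete output space, a matching event in $\Omega$ that realizes the same value of $P(\cdot) - e^\eps Q(\cdot)$. By \Cref{def:dominating-pair}, showing $(P, Q) \dominates (P^{\cE}, Q^{\cE})$ amounts to verifying $\Deps(P^{\cE} \| Q^{\cE}) \le \Deps(P \| Q)$ for every $\eps \in \R$, so I fix an arbitrary $\eps$ and unwind the definition $\Deps(P^{\cE} \| Q^{\cE}) = \sup_{\cI} P^{\cE}(\cI) - e^\eps Q^{\cE}(\cI)$, where the supremum ranges over subsets $\cI \subseteq \{1, \ldots, m\}$ of the discrete output space.

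First I would associate to each subset $\cI \subseteq \{1, \ldots, m\}$ the event $\event_\cI := \bigcup_{i \in \cI} G_i \subseteq \Omega$, where $(G_1, \ldots, G_m)$ are the cells of $\cE$. Because the cells of a partition are pairwise disjoint and measurable, finite additivity of the measures gives $P(\event_\cI) = \sum_{i \in \cI} P(G_i) = \sum_{i \in \cI} P^{\cE}(i) = P^{\cE}(\cI)$, and identically $Q(\event_\cI) = Q^{\cE}(\cI)$. Consequently $P^{\cE}(\cI) - e^\eps Q^{\cE}(\cI) = P(\event_\cI) - e^\eps Q(\event_\cI)$, so each term appearing in the discrete supremum is attained verbatim by a genuine event in $\Omega$.

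The final step is to take suprema: since $\event_\cI$ is one particular event, $P(\event_\cI) - e^\eps Q(\event_\cI) \le \sup_\event P(\event) - e^\eps Q(\event) = \Deps(P \| Q)$, and taking the supremum over all $\cI$ then yields $\Deps(P^{\cE} \| Q^{\cE}) \le \Deps(P \| Q)$. As $\eps$ was arbitrary, this is precisely $(P, Q) \dominates (P^{\cE}, Q^{\cE})$.

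I do not anticipate a genuine obstacle here, as this is the standard data-processing inequality specialized to the deterministic coarsening map $\omega \mapsto i$ whenever $\omega \in G_i$, under which $P^{\cE}$ and $Q^{\cE}$ are exactly the pushforwards of $P$ and $Q$; the content is simply that restricting the supremum to the sub-collection of events generated by the partition can only decrease it. The one point requiring minor care is measurability: one must ensure the partition cells are measurable so that each $\event_\cI$ is a valid event, which is implicit in calling $\cE$ a partition of the measurable space $\Omega$. Everything else is bookkeeping with finite additivity.
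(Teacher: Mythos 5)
Your proof is correct: the paper states this proposition without proof, citing it as the standard post-processing property, and your argument---identifying each discrete event $\cI$ with the measurable union $\bigcup_{i \in \cI} G_i$, using finite additivity to match the values of $P(\cdot) - e^{\eps}Q(\cdot)$, and noting that the discrete supremum is therefore a supremum over a sub-collection of events of $\Omega$, valid for every $\eps \in \R$---is exactly the canonical argument the citation is standing in for. No gaps; the measurability caveat you flag is indeed the only point needing care and is handled by the partition cells being measurable.
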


We construct the pair $(\tPS, \tQS)$ by instantiating \Cref{prop:post-processing} with the set $\cG = \{ G_0, G_1, \ldots, G_{m+1} \}$ parameterized by a sequence $C_1 < \cdots < C_m$ of values defined as follows: $G_0 := \{ w \in \R^S : \max_s w_s \le C_1 \}$, $G_i := \{ w \in \R^S : C_i < \max_s w_s \le C_{i+1}\}$ for $1 \le i \le m$ and $E_{m+1} := \{ w \in \R^S : C_m < \max_s w_s \}$; in other words, $G_0 = \R^S \smallsetminus \Gamma_{C_1}$, $G_i = \Gamma_{C_i} \smallsetminus \Gamma_{C_{i+1}}$ for $1 \le i \le m$ and $G_{m+1} = \Gamma_{C_m}$.

\begin{theorem}\label{thm:SD-hockey-lower}
For all $\sigma > 0$, $\eps \ge 0$, all valid $n$, $b$, $T$, and any finite sequence $C_1, \ldots, C_m$ of values used to define $\tPS$, $\tQS$ as above, it holds that
\[\textstyle
\deltaSD(\eps) ~\ge~ \max\{ \Deps(\tPS^{\otimes E} \| \tQS^{\otimes E}), \Deps(\tQS^{\otimes E} \| \tPS^{\otimes E})\}.
\]
\end{theorem}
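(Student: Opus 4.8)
The plan is to obtain the bound by combining the single-epoch dominating pair with post-processing and with tensorization of dominating pairs under composition. The starting point, already recorded above, is that dynamic reshuffling makes $\ABLQSD$ an independent $E$-fold product of the single-epoch mechanism on a fixed witnessing pair $\bx \to_z \bx'$, so that via composition of the single-epoch dominating pair one has $\deltaSD(\eps) \ge \max\{\Deps(\PSD \| \QSD), \Deps(\QSD \| \PSD)\}$ with $\PSD = \PS^{\otimes E}$ and $\QSD = \QS^{\otimes E}$. It therefore suffices to lower bound each of these two hockey stick divergences by the corresponding divergence between the discrete product distributions $\tPS^{\otimes E}$ and $\tQS^{\otimes E}$.

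First I would apply the post-processing property (\Cref{prop:post-processing}) with the partition $\cG$: applied to $(\PS, \QS)$ it gives $(\PS, \QS) \dominates (\PS^{\cG}, \QS^{\cG}) = (\tPS, \tQS)$, and applied to the swapped pair it gives $(\QS, \PS) \dominates (\tQS, \tPS)$. This step is insensitive to the specific thresholds $C_1 < \cdots < C_m$: \Cref{prop:post-processing} holds for \emph{every} finite partition, and here every partition induced by levels of $\max_s w_s$ is admissible, which is exactly why the theorem is quantified over all such finite sequences.

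Next I would tensorize. Regarding the discrete pair $(\tPS, \tQS)$ as the output of a mechanism on a fixed pair of adjacent inputs that is dominated by $(\PS, \QS)$, the composition property of dominating pairs — applied $E$-fold with identical factors — yields $(\PS^{\otimes E}, \QS^{\otimes E}) \dominates (\tPS^{\otimes E}, \tQS^{\otimes E})$, i.e. $(\PSD, \QSD) \dominates (\tPS^{\otimes E}, \tQS^{\otimes E})$, and symmetrically $(\QSD, \PSD) \dominates (\tQS^{\otimes E}, \tPS^{\otimes E})$. Unfolding the definition of domination at the relevant $\eps$ gives $\Deps(\PSD \| \QSD) \ge \Deps(\tPS^{\otimes E} \| \tQS^{\otimes E})$ and $\Deps(\QSD \| \PSD) \ge \Deps(\tQS^{\otimes E} \| \tPS^{\otimes E})$; chaining these with the first-paragraph lower bound closes the argument.

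The step I expect to be the main obstacle is the tensorization: one must verify that the composition property of dominating pairs, stated above for mechanisms, is legitimately applied to a fixed pair of distributions, and that domination is genuinely preserved under $E$-fold products rather than just a single fold. This is the only place where the multi-epoch structure enters nontrivially; everything else is either the given single-epoch reduction or a direct invocation of post-processing. A secondary point worth handling carefully is that both orderings $\Deps(\PSD \| \QSD)$ and $\Deps(\QSD \| \PSD)$ are controlled — it is cleanest to apply \Cref{prop:post-processing} separately to $(\PS, \QS)$ and to $(\QS, \PS)$, thereby obtaining both directions directly rather than appealing to any symmetry relating the two divergences.
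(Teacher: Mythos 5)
Your proposal is correct and follows essentially the same route as the paper, which derives the theorem directly from the preceding discussion: the single-epoch witnessing pair gives $(\PSD,\QSD)=(\PS^{\otimes E},\QS^{\otimes E})$ as a lower-bound pair for $\ABLQSD$, post-processing (\Cref{prop:post-processing}) gives $(\PS,\QS)\dominates(\tPS,\tQS)$, and tensorization of domination under composition transfers this to the $E$-fold products. Your extra care about handling both orderings separately is harmless (and in fact already implied, since domination is required for all $\eps\in\R$, which controls the reversed divergence as noted in the preliminaries).
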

We use the \texttt{dp\_accounting} library \citep{GoogleDP} to numerically compute a lower bound on the quantity above, using PLD.
In particular, we choose $C_1$ and $C_m$ such that $\PS(G_0)$ and $\PS(G_{m+1})$ are sufficiently small and choose other $C_i$'s to get a sufficiently fine discretization of the interval between $C_1$ and $C_m$.\footnote{In our evaluation, we choose $C_1$ and $C_m$ to ensure $P_{\cal S}(G_0) + P_{\cal S}(G_{m+1}) \le e^{-40}$. We chose other $C_i$’s to be equally spaced in between $C_1$ and $C_m$ with a gap of $\Delta \cdot \sigma^2$, where $\Delta$ is the desired discretization of the PLD. This heuristic choice is guided by the intuition that the privacy loss is approximately linear in $\max_t x_t / \sigma^2$, and thus the chosen gap means that this approximate privacy loss varies by $\Delta$ between buckets.}

An illustration of these accounting methods is presented in \Cref{fig:criteo-pctr-vary-bs}, which demonstrates the significant gap where the optimal $\sigma$ for dynamic/persistent shuffling is significantly larger than compared to Poisson subsampling, even when using an optimistic estimate for shuffling as above.
We provide the implementation of our privacy accounting methods described above in an iPython notebook\footnote{\scriptsize\url{https://colab.research.google.com/drive/1vCijMEQqRCm0x3EOUUKomcZnwx76sz64?usp=sharing}} hosted on \cite{googlecolab}, executable using the freely available Python CPU runtime.

\section{Experiments} \label{sec:experiments}

\begin{figure}
\centering
\includegraphics[width=0.95\linewidth]{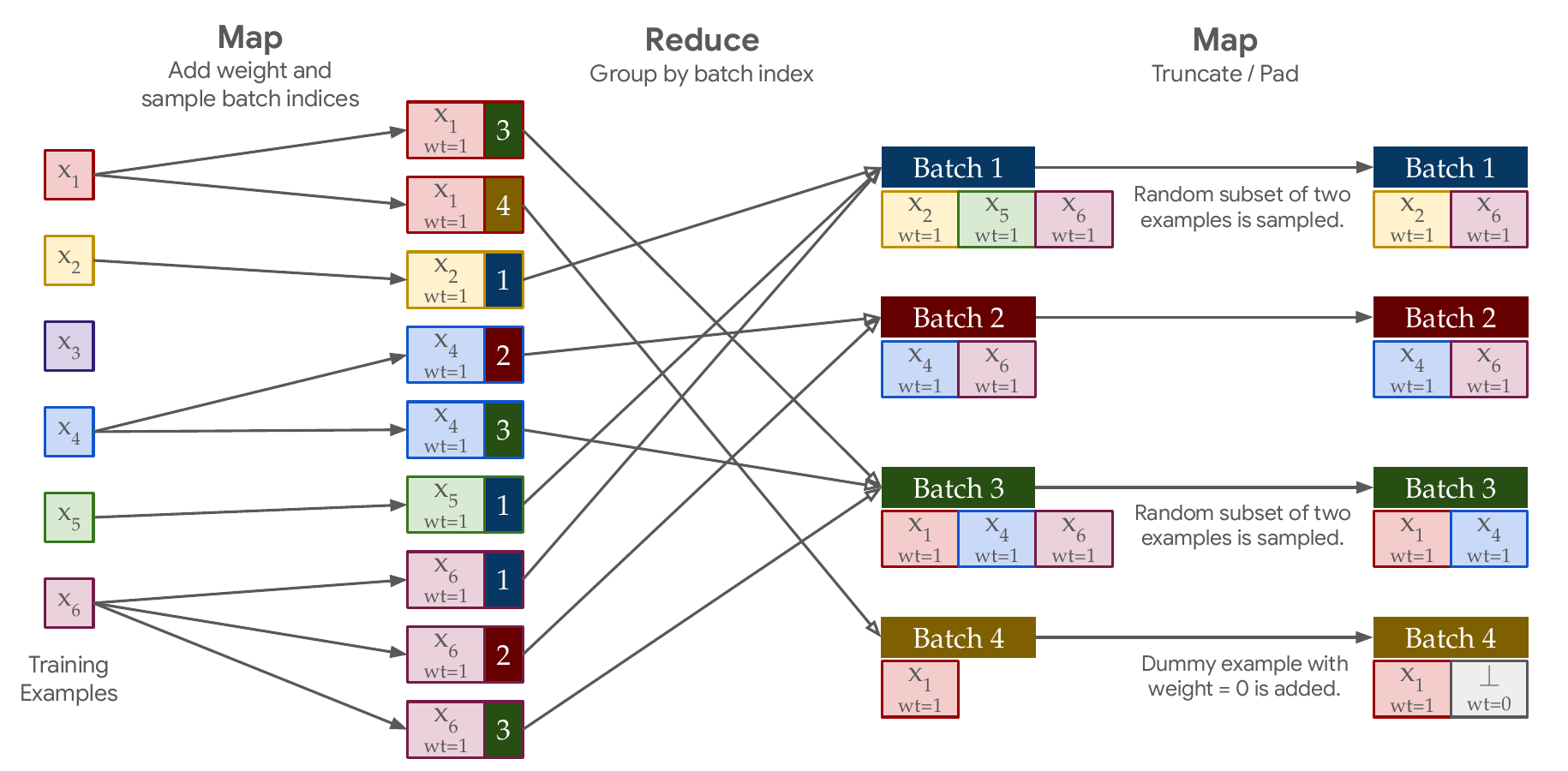}
\caption{Visualization of the massively parallel computation approach for Poisson subsampling at scale. Consider $6$ records $x_1, \ldots, x_6$ sub-sampled into $4$ batches with a maximum batch size of $B=2$. The \texttt{Map} operation adds a ``weight'' parameter of $1$ to all examples, and samples indices of batches to which each example will belong. The \texttt{Reduce} operation groups by the batch indices. The final \texttt{Map} operation truncates batches with more than $B$ examples (e.g., batches $1$ and $3$ above), and pads dummy examples with weight $0$ in batches with fewer than $B$ examples (e.g., batch $4$ above).}
\label{fig:map-reduce}
\end{figure}

\def\figheight{0.235}
\begin{figure*}
  \centering
  \setlength{\tabcolsep}{1pt} 
  \begin{tabular}{ccc}
  \includegraphics[height=\figheight\linewidth]{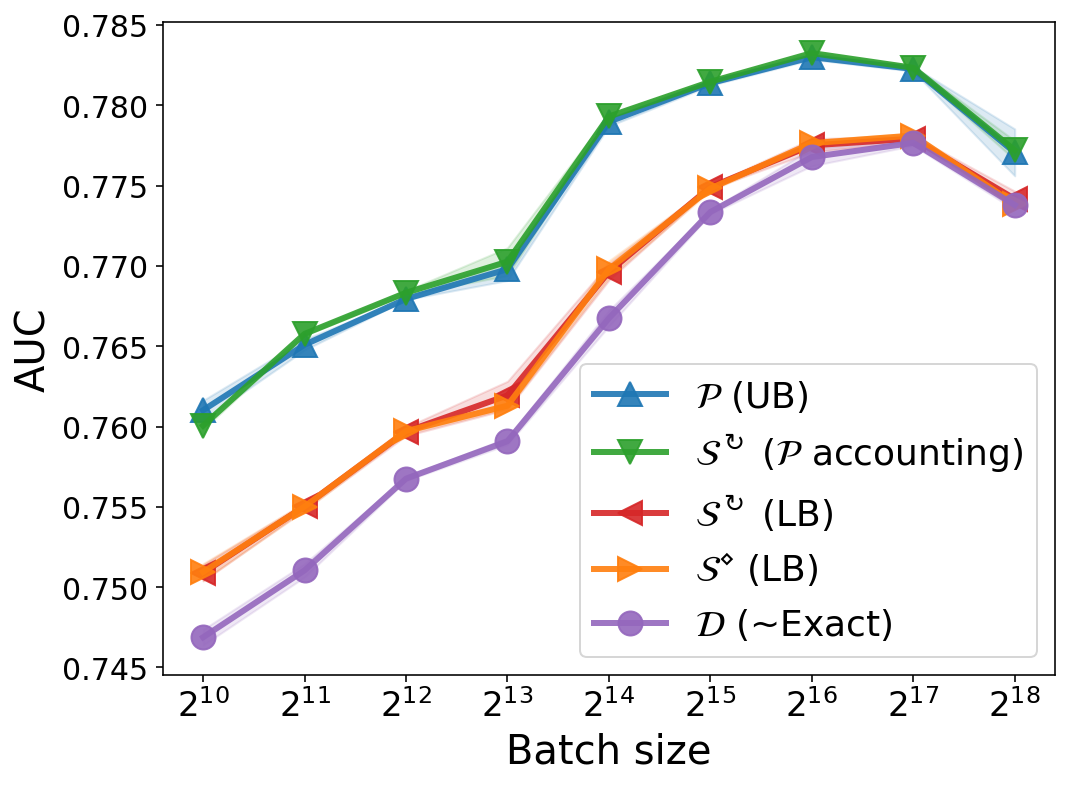} &
  \includegraphics[height=\figheight\linewidth]{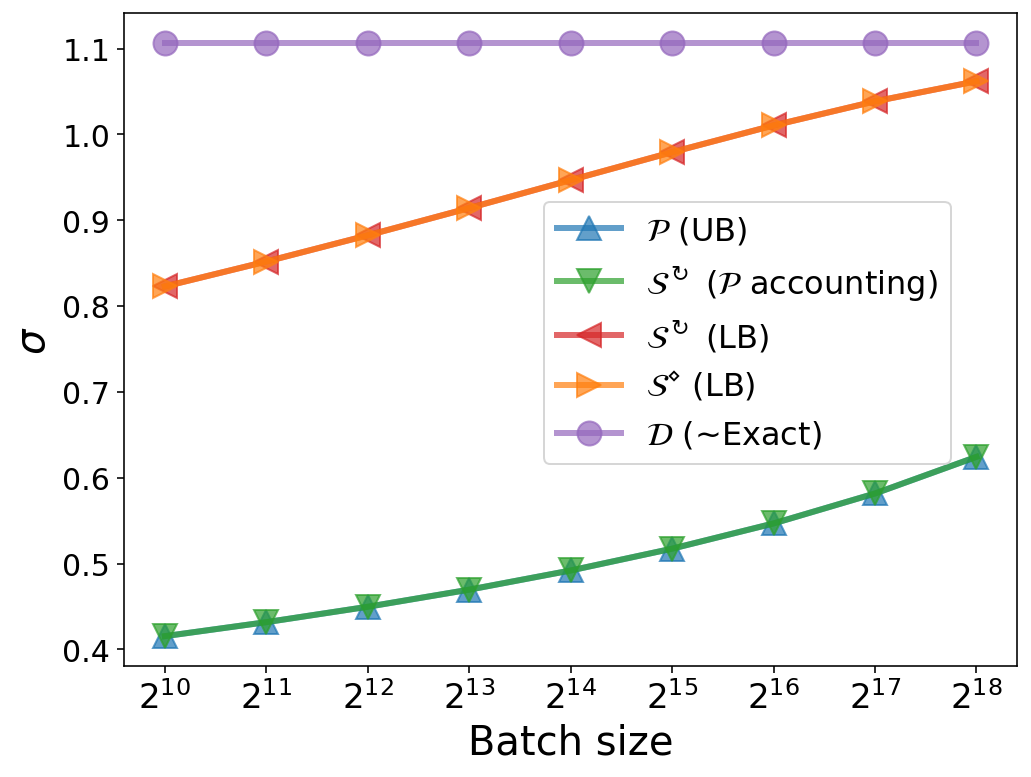} &
  \includegraphics[height=\figheight\linewidth]{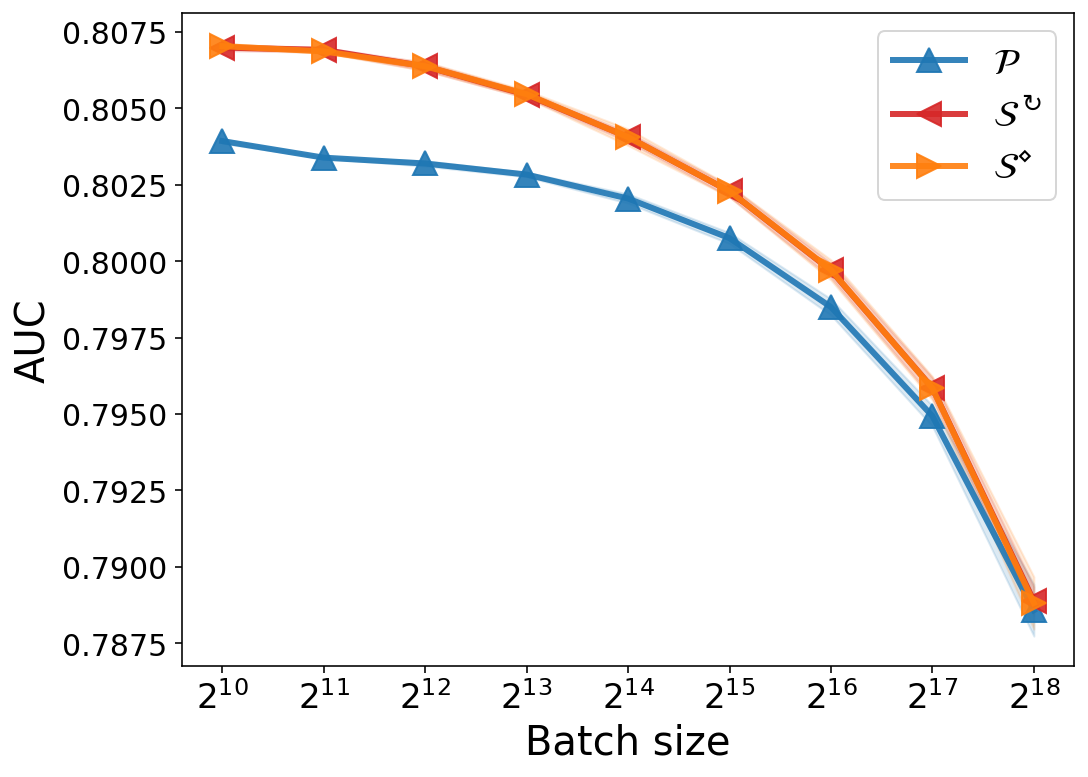}
  \\
  \includegraphics[height=\figheight\linewidth]{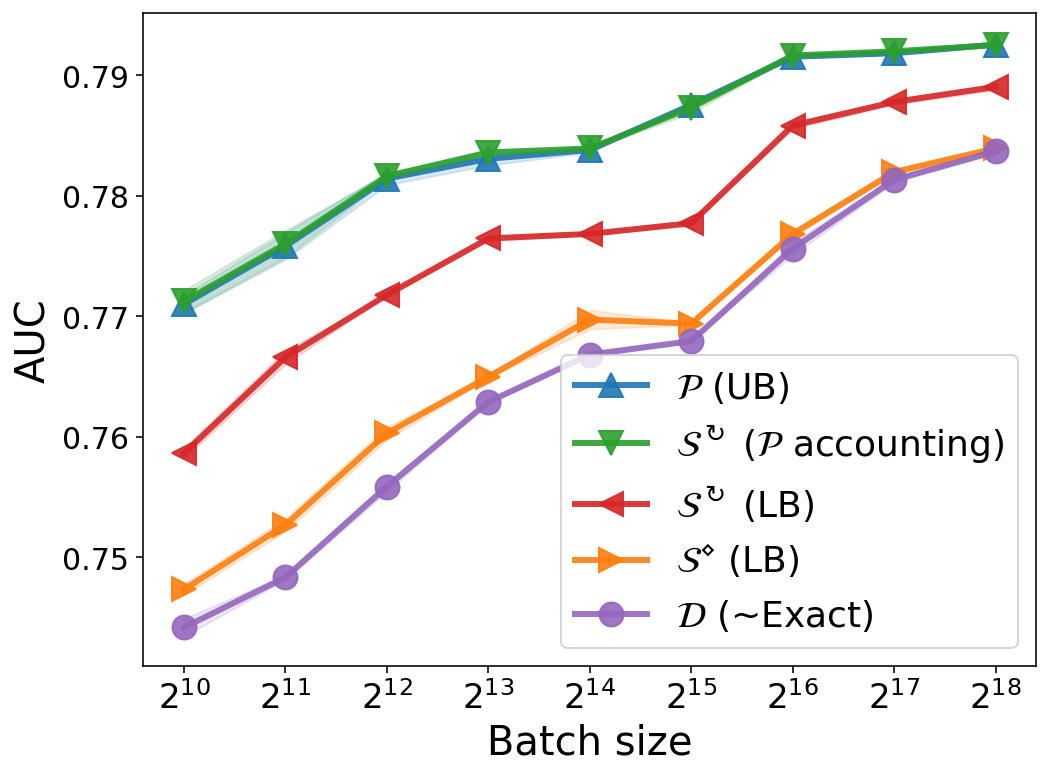} &
  \includegraphics[height=\figheight\linewidth]{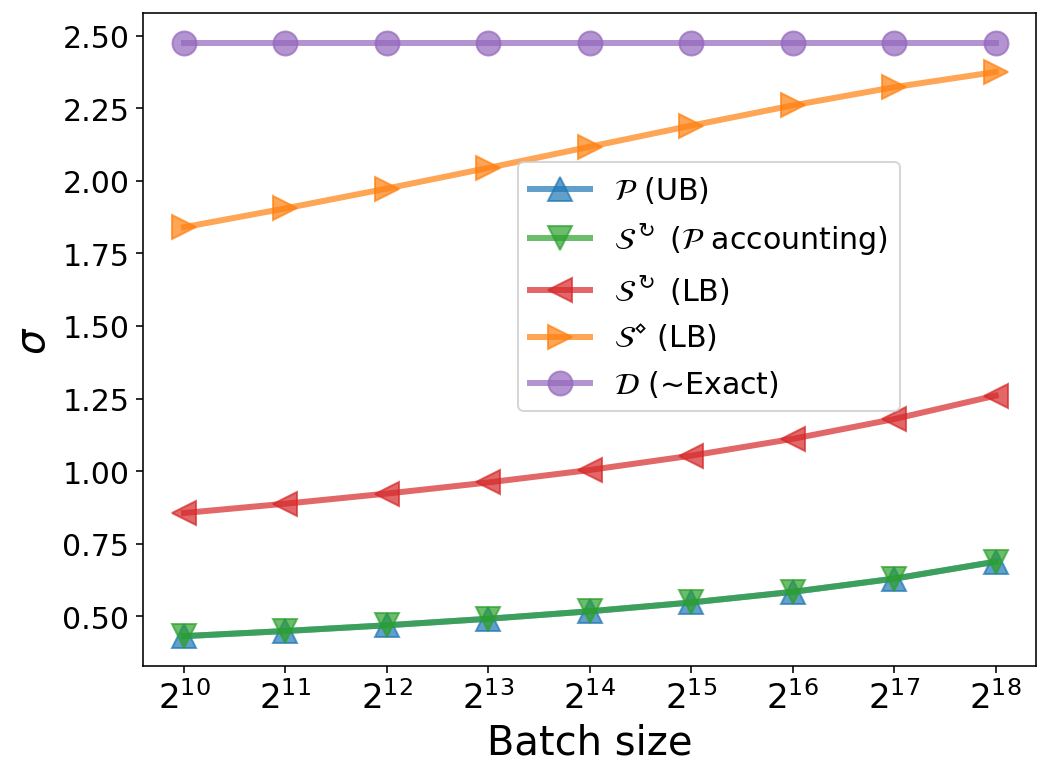} &
  \includegraphics[height=\figheight\linewidth]{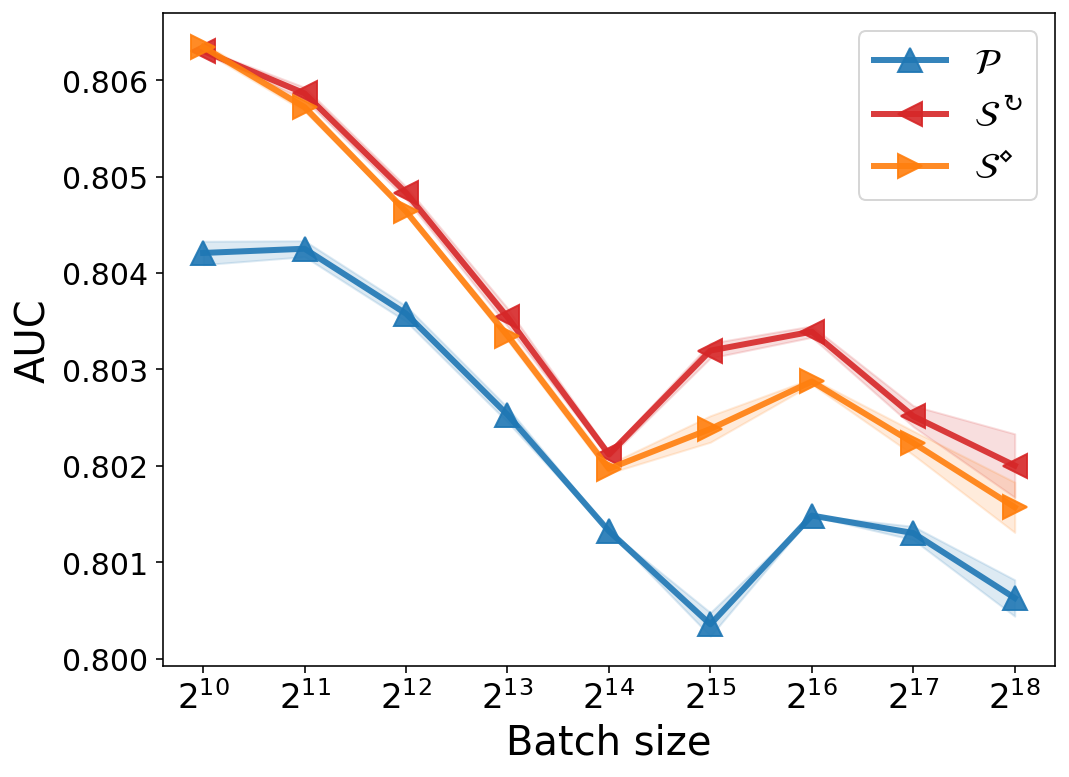}
  \end{tabular}
  \caption{AUC (left) and bounds on $\sigmaB$ values (middle) for $\eps = 5, \delta=2.7 \cdot 10^{-8}$ and using $1$ epoch (top) and $5$ epochs (bottom) of training on a linear-log scale; AUC (right) is with non-private training.}
  \label{fig:criteo-pctr-vary-bs}
\end{figure*}

We compare $\DPSGD$ using the following batch sampling algorithms at corresponding noise scales:
\begin{itemize}[itemsep=0pt,topsep=-3pt,parsep=0pt]
\item Deterministic batches (using nearly exact value of $\sigmaD(\eps, \delta)$ via \Cref{thm:D-hockey}),
\item Truncated Poisson subsampled batches (using upper bound on $\sigmaP(\eps, \delta)$ via \Cref{thm:P-hockey}),
\item Persistent shuffled batches (using lower bound on $\sigmaSP(\eps, \delta)$ via \Cref{thm:SP-hockey-lower}), and
\item Dynamic shuffled batches (using lower bound on $\sigmaSD(\eps, \delta)$ via \Cref{thm:SD-hockey-lower}).
\end{itemize}
As a comparison, we also evaluate $\DPSGD$ with dynamic shuffled batches, but using noise that is an upper bound on $\sigmaP(\eps, \delta)$ (with no truncation, i.e. $B = \infty$), to capture the incorrect, but commonly employed approach in practice. Finally, in order to understand the impact of using different batch sampling to model training in isolation, we compare models trained with SGD under truncated Poisson subsampling, and dynamic and persistent shuffling without any clipping or noise. We use massively parallel computation (Map-Reduce operations~\citep{dean04mapreduce}) to generate batches with truncated Poisson subsampling in a scalable manner as visualized in \Cref{fig:map-reduce}; the details, with a \texttt{beam} pipeline implementation, is provided in \Cref{apx:poisson-beam}.

We run our experiments on the Criteo Display Ads pCTR Dataset~\citep{tien14criteokaggle}, which contains around 46 million examples from a week of Criteo ads traffic. Each example has 13 integer features and 26 categorical features, and the objective is to predict the probability of clicking on an ad given these features. We use the labeled training set from the dataset, split chronologically into a 80\%
We include more details about the model architectures and training in \Cref{apx:training}.

We run experiments varying the (expected) batch size from $1\,024$ to $262\,144$, for both private training with $(\eps=5, \delta=2.7 \cdot 10^{-8})$ and non-private training and with $1$ or $5$ epochs. We plot the results in \Cref{fig:criteo-pctr-vary-bs}.
As mostly expected, we observe that the model utility generally improves with smaller $\sigma$.
Truncated Poisson subsampling performs similarly to dynamic shuffling for the same value of $\sigma$, although it performs worse for non-private training.
The latter could be attributed to the fact that when using truncated Poisson subsampling, a substantial fraction\footnote{The expected number of examples that are never used even once during training is at least $(1 - \frac bn)^{T}$, which approaches $e^{-E}$ in the limit as $b/n \to 0$, for a fixed number of epochs $E = bT/n$.} of examples are never seen in the training with high probability.
However, this does not appear to significantly affect the private training model utility for the range of parameters that we consider, since we observe that truncated Poisson subsampling behaves similarly to dynamic shuffling with noise scale of $\sigmaP(\eps, \delta)$ (the values of $\sigma$ for ``$\cP$'' and ``$\SD$ ($\cP$ accounting)'' visually overlap in \Cref{fig:criteo-pctr-vary-bs}; the values for $\cP$ are only negligibly larger, since it accounts for truncation). Truncated Poisson subsampling performs better when compared to shuffling when the latter using our lower bound on $\sigmaSD(\eps, \delta)$, which suggests that shuffling with correct accounting (that is, with potentially even larger $\sigma$) would only perform worse.

We also run experiments with a fixed batch size $65\,536$ and varying $\eps$ from $1$ to $256$, fixing $\delta = 2.7 \cdot 10^{-8}$. We plot the results and the corresponding $\sigma$ values in \Cref{fig:criteo-pctr-vary-eps}. We again observe that shuffling (with our lower bound accounting) performs worse than truncated Poisson subsampling in the high privacy (low $\eps$) regime, but performs slightly better in the low privacy (high $\eps$) regime (this is because at large $\eps$ the noise required under Poisson subsampling is in fact larger than that under shuffling). Moreover, we observe that shuffling performs similarly to truncated Poisson subsampling when we use similar value of $\sigma$, consistent with our observations from \Cref{fig:criteo-pctr-vary-bs}.

Finally, as a comparison, we compute the upper bounds on $\sigmaS(\eps, \delta)$ via the privacy amplification by shuffling bounds by \cite{feldman21hiding}. We find that these bounds tend to be vacuous in many regime of parameters, namely they are no better than $\sigmaD(\eps, \delta)$, which is clearly an upper bound on $\sigmaS(\eps, \delta)$. Details are provided in \Cref{apx:shuffling-amp}.

\def\figheight{0.27}%
\begin{figure*}
  \centering
  \begin{tabular}{cc}
  \includegraphics[height=\figheight\linewidth]{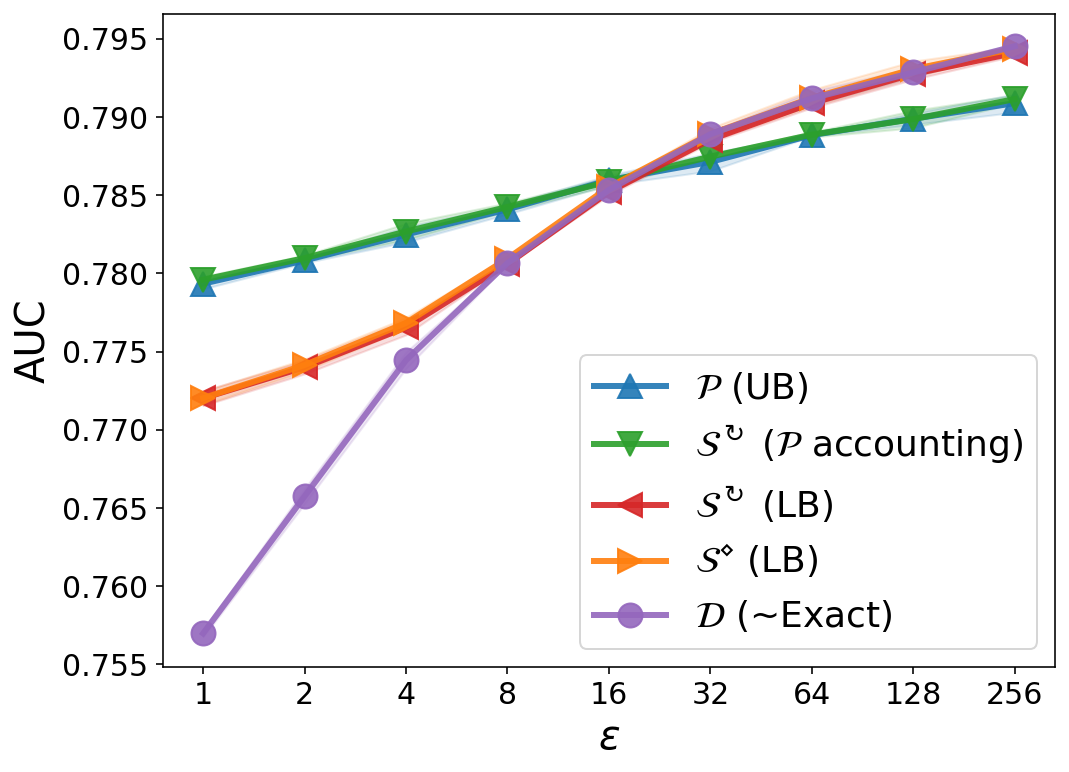} & 
  \includegraphics[height=\figheight\linewidth]{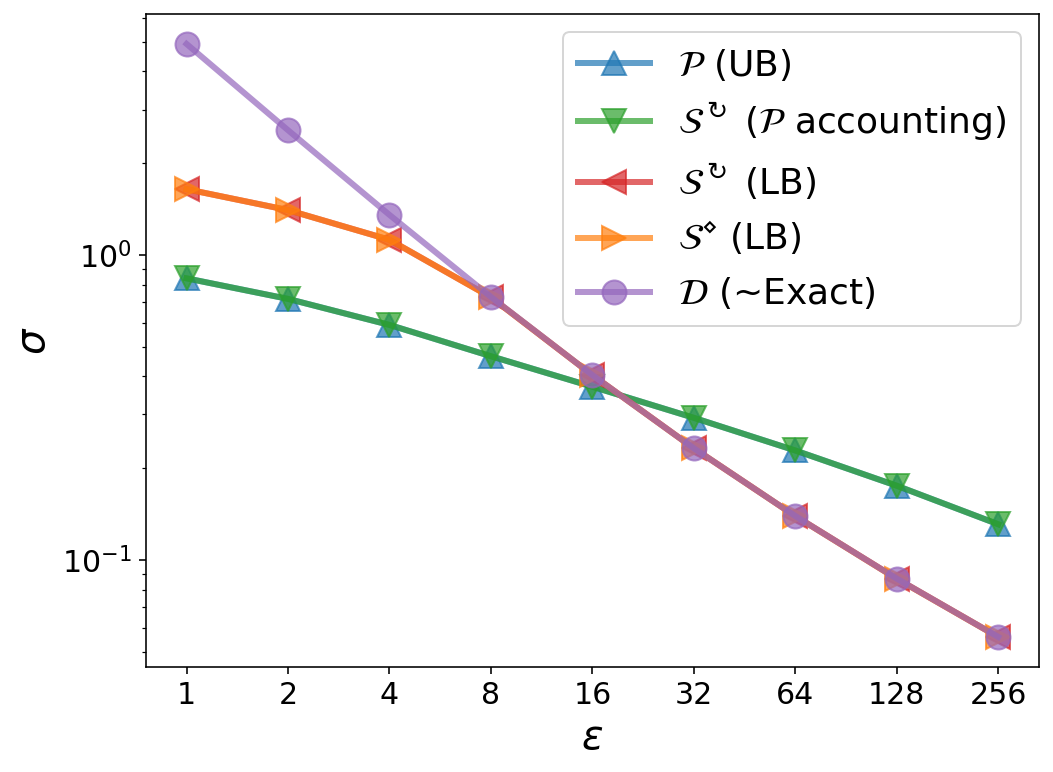} \\
  \includegraphics[height=\figheight\linewidth]{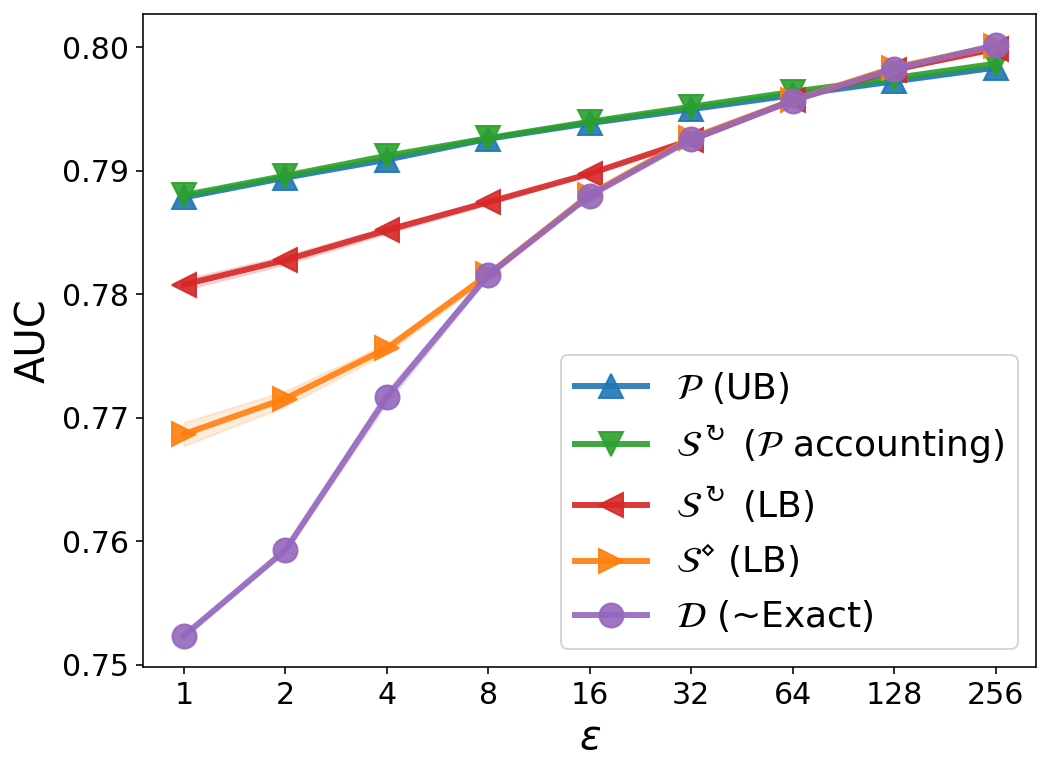} &
  \includegraphics[height=\figheight\linewidth]{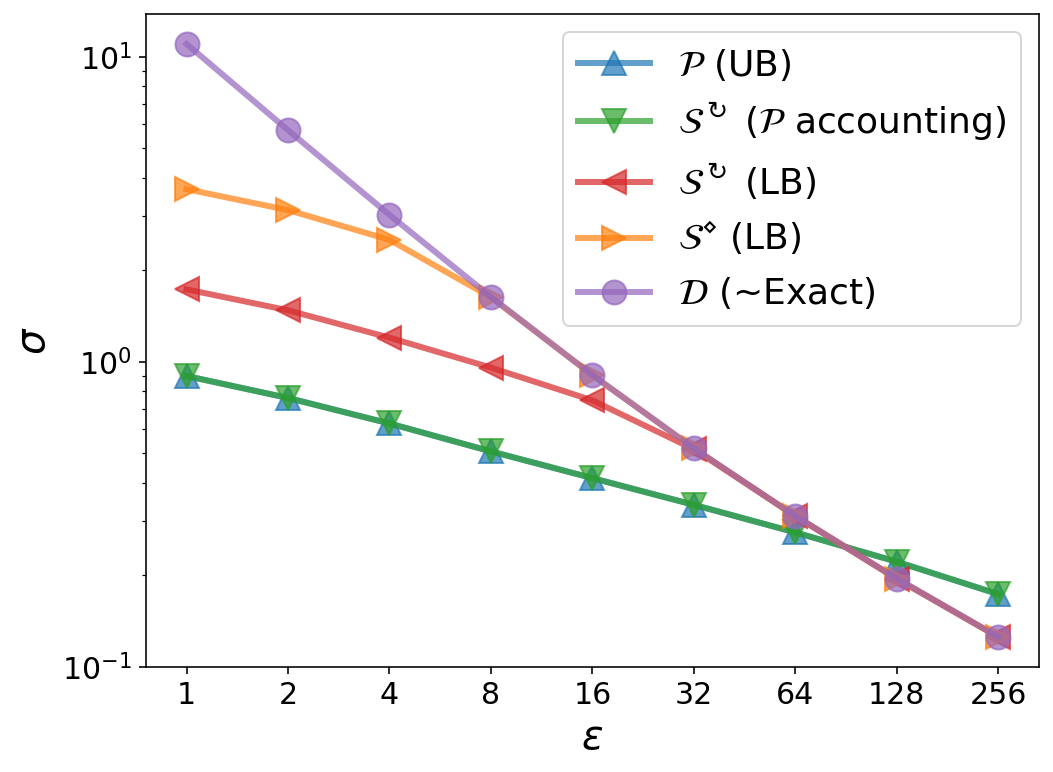} 
  \end{tabular}
  \caption{AUC (left) and $\sigma$ values (right) with varying $\eps$, fixing $\delta = 2.7 \cdot 10^{-8}$ and using (top) $1$ epoch and (bottom) $5$ epochs of training.   $\sigma$ is in log scale to highlight the differences at high $\eps$.}
  \label{fig:criteo-pctr-vary-eps}
\end{figure*}

\section{Conclusion}\label{sec:conclusion}
We provide new lower bounds on the privacy analysis of Adaptive Batch Linear Query mechanisms, under persistent and dynamic shuffling batch samplers, extending the prior work of \cite{chua2024private} that analyzed the single epoch case. Our lower bound method continues to identify separations in the multi-epoch setting, showing that the amplification guarantees due to even dynamic shuffling can be significantly limited compared to the amplification due to Poisson subsampling in regimes of practical interest.

We also provide evaluation of $\DPSGD$ with various batch samplers with the corresponding privacy accounting, and propose an approach for implementing Poisson subsampling at scale using massively parallel computation. Our findings suggest that with provable privacy guarantees on model training, Poisson-subsampling-based DP-SGD has better privacy-utility trade-off than shuffling-based DP-SGD in many practical parameter regimes of interest, and in fact, essentially match the utility of shuffling-based DP-SGD at the same noise level. Thus, we consider Poisson-subsampling-based DP-SGD as a viable approach for implementing $\DPSGD$ at scale, given the lower bound on the privacy analysis when using shuffling.

Several interesting directions remain to be investigated. Firstly, our technique only provides a {\em lower bound} on the privacy guarantee when using persistent / dynamic shuffled batches. While some privacy amplification results are known~\citep{feldman21hiding,feldman23stronger}, providing a tight (non-vacuous) upper bound on the privacy guarantee in these settings remains an open challenge. This can be important in regimes where shuffling does provide better privacy guarantees than Poisson subsampling.

Another important point to note is that persistent and dynamic shuffling are not the only forms of shuffling used in practice.
For example, methods such as \texttt{tf.data.Dataset.shuffle} or \texttt{torchdata.datapipes.iter.Shuffler} provide a uniformly random shuffle, only when the size of its ``buffer'' is larger than the dataset. Otherwise, for buffer size $b$, it returns a random record among the first $b$ records, and immediately replaces it with the next record ($(b + 1)$th in this case), and repeats this process, which leads to an asymmetric form of shuffling. Such batch samplers merit more careful privacy analysis.

\subsection*{Acknowledgements}
We would like to thank Charlie Harrison and Ethan Leeman for valuable discussions, as well as anonymous reviewers for their thoughtful feedback that helped improve the quality of the paper.

\newpage
\bibliographystyle{plainnat}
\bibliography{main.bbl}

\newpage
\appendix

\section{Massively Parallel Implementation of Truncated Poisson Subsampling}\label{apx:poisson-beam}

We use massively parallel computation to generate batches with truncated Poisson subsampling in a scalable manner.
Given the input parameters $b$, $B$, $T$, and $n$, we first compute the maximum batch size $B$ such that $\Psi(n, b, B) \cdot T \cdot (1 + e^\eps) \le 10^{-5} \cdot \delta$.
For each example in the input dataset, we generate a list of batches that the example would be in when sampled using Poisson subsampling.  While a naive implementation would sample $T$ Bernoulli random variables with parameter $b/n$, this can be made efficient by sampling the indices of the batches containing the examples directly, since the difference between two consecutive such indices is distributed as a geometric random variable with parameter $b/n$.
We then \texttt{group} the examples by the batches, and subsample each batch uniformly, without replacement, to obtain a batch of size at most $B$. For batches with size smaller than $B$, we pad the batch with examples such that every batch has size $B$.
In order to differentiate the padding examples from the non-padding examples, we add a weight to all the examples, where the non-padding examples have weight $1$ and the padding examples have weight $0$.
During the training, we use a weighted loss function using these weights, such that the padding examples do not have any effect on the training loss.

We include a code snippet for implementing truncated Poisson subsampling using massively parallel computation. This is written using Apache \texttt{beam}~\citep{apache_beam} in Python, which can be implemented on distributed platforms such as \cite{apache_flink}, \cite{apache_spark}, \cite{google_cloud_dataflow}.

{\fontsize{8pt}{10}\selectfont
\begin{minted}[style=emacs]{python}
import apache_beam as beam
import numpy as np
import tensorflow as tf

class PoissonSubsample(beam.PTransform):
  """Generate batches of examples using poisson subsampling.

  Attributes:
    max_batch_size: Maximum batch size.
    num_batches: Number of batches.
    subsampling_probability: Probability of sampling each example in each batch.
    sample_size: Number of samples to sample at a time.
  """

  def __init__(
      self,
      max_batch_size: int,
      num_batches: int,
      subsampling_probability: float,
      sample_size: int,
  ):
    self._max_batch_size = max_batch_size
    self._num_batches = num_batches
    self._subsampling_probability = subsampling_probability
    self._sample_size = sample_size
    self._rng = np.random.default_rng()

  def get_batch_indices(self):
    """Returns the indices of the batches that an example is in.

    Assuming that an example is sampled in each batch using poisson subsampling,
    return the list of indices of the batches that the example is in.
    """
    largest_batch_index = 0
    batch_indices = np.array([])
    if self._subsampling_probability == 0.0:
      return batch_indices
    while largest_batch_index < self._num_batches:
      # Sample batches using geometric distribution
      geometric_samples = self._rng.geometric(
          p=self._subsampling_probability, size=self._sample_size
      )
      batch_indices = np.concatenate(
          (batch_indices, largest_batch_index + np.cumsum(geometric_samples))
      )
      largest_batch_index = batch_indices[-1]
    return batch_indices[batch_indices <= self._num_batches]

  def _add_padding(self, batch):
    """Returns batch padded to max_batch_size with padding examples.

    Pads input batch to size max_batch_size by adding padding examples. The
    padding examples are specified by adding a weight to all the examples, where
    padding examples have weight 0 and non-padding examples have weight 1.

    Args:
      batch: A tuple of (batch_id, list of examples)
    """
    batch_id, examples = batch
    examples = list(examples)
    if len(examples) < self._max_batch_size:
      padding_example = tf.train.Example()
      padding_example.CopyFrom(examples[0])
      padding_example.features.feature['weight'].float_list.value[:] = [0.0]
      examples.extend(
          [padding_example] * (self._max_batch_size - len(examples))
      )
    return (batch_id, examples)

  def expand(self, pcoll):
    def generate_batch_ids(example):
      # Convert to pairs consisting of (batch id, example)
      for batch_id in self.get_batch_indices():
        yield (int(batch_id), example)

    def _add_weights(example):
      # Add weight with value 1 to indicate non-padding examples
      weighted_example = tf.train.Example()
      weighted_example.CopyFrom(example)
      weighted_example.features.feature['weight'].float_list.value[:] = [1.0]
      return weighted_example

    # Group elements into batches keyed by the batch id
    grouped_pcoll = (
        pcoll
        | 'Add weights' >> beam.Map(_add_weights)
        | 'Key by batch id' >> beam.FlatMap(generate_batch_ids)
        | 'Sample up to max_batch_size elements per batch'
        >> beam.combiners.Sample.FixedSizePerKey(self._max_batch_size)
        | 'Add padding' >> beam.Map(self._add_padding)
    )
    return grouped_pcoll
\end{minted}
}

\section{Training details}\label{apx:training}
We use a neural network with five layers and $\sim$78M parameters as the model. The first layer consists of feature transforms for each of the categorical and integer features. Categorical features are mapped into dense feature vectors using an embedding layer, where the embedding dimensions are fixed at $48$. We apply a log transform for the remaining integer features, and concatenate all the features together. The next three layers are fully connected layers with $598$ hidden units each and a ReLU activation function. The last layer consists of a fully connected layer which gives a scalar logit prediction.

We use the Adam or Adagrad optimizer with a base learning rate in $\{0.0001, 0.0005, 0.001, 0.005$, $0.01, 0.05, 0.1, 0.5, 1\}$, which is scaled with a cosine decay, and we tune the norm bound $C\in\{1, 5, 10, 50\}$. For the experiments with varying batch sizes, we use batch sizes that are powers of $2$ between $1\,024$ and $262\,144$, with corresponding maximum batch sizes $B$ in $\{1\,328, 2\,469, 4\,681, 9\,007, 17\,520, 34\,355, 67\,754, 134\,172, 266\,475\}$. For the experiments with varying $\eps$, we vary $\eps$ as powers of $2$ between $1$ and $256$, with batch size $65\,536$ and corresponding maximum batch sizes $B$ in $\{67\,642, 67\,667, 67\,725, 67\,841, 68\,059, 68\,449, 69\,106, 70\,156, 71\,760\}$. With these choices of the maximum batch sizes, the $\sigma$ values for truncated Poisson subsampling are only slightly larger than without truncation, as we observe from the nearly overlapping curves in \cref{fig:criteo-pctr-vary-bs} and \cref{fig:criteo-pctr-vary-eps}. The training is done using NVIDIA Tesla P100 GPUs, where each epoch of training takes 1-2 hours on a single GPU.

\section{Privacy Amplification by Shuffling}\label{apx:shuffling-amp}

We evaluate the upper bounds on $\sigmaS(\eps, \delta)$ via privacy amplification by shuffling results of \cite{feldman21hiding}, as applied in the context of our experiments in \Cref{fig:criteo-pctr-vary-bs}. In particular, their Proposition 5.3 states that if the unamplified (Gaussian) mechanism satisfies $(\eps_0, \delta_0)$-DP, then $\ABLQS$ with $T$ steps (in a single epoch setting) will satisfy $(\eps, \eta + O(e^\eps \delta_0 n)$-DP for any $\eta \in [0, 1]$ and
\[\textstyle
\eps = O\prn{(1 - e^{-\eps_0}) \prn{\frac{\sqrt{e^{\eps_0} \log(1/\eta)}}{\sqrt{T}} + \frac{e^{\eps_0}}{T}}}.
\]
They also obtain a tighter numerical bound on $\eps$ with an implementation provided in a GitHub repository.\footnote{\href{https://github.com/apple/ml-shuffling-amplification}{https://github.com/apple/ml-shuffling-amplification}}

We evaluate their bounds in an optimistic manner. Namely, for a given value of $\delta$, we compute an optimistic estimate on $\eps$ compared to the bound above by setting $\delta_0 = \delta / n$ and setting $\eps_0 = \epsD(\delta_0)$, and set $\eta = \delta$ (note that this is optimistic because the above proposition requires setting $\delta = \eta + O(e^\eps \delta_0 n)$, which is larger than the $\delta$ we are claiming).
We use the numerical analysis method provided in the library by \cite{feldman21hiding} to compute $\sigmaS$ (via a binary search on top of their method to compute an upper bound on $\epsS$), and plot it in \Cref{fig:feldman-comparison}, along with the lower bound on $\sigmaS$ as obtained by \cite{chua2024private}, as well as $\sigmaD$. We find that the bounds by \cite{feldman21hiding} are vacuous for batch sizes $2\,048$ and above, in that they are even larger than the bounds without any amplification.

\begin{figure*}[h]
\centering
\includegraphics[width=0.5\linewidth]{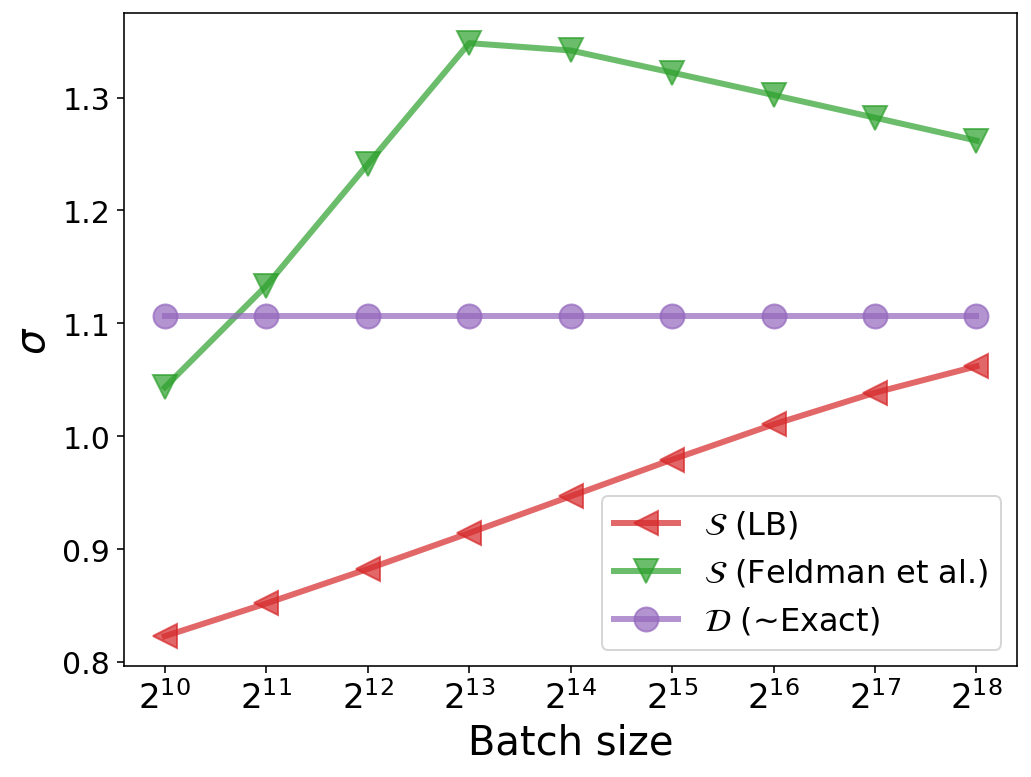}
\caption{Comparison of an optimistic estimate of the upper bound on $\sigmaS$ from \cite{feldman21hiding} against the lower bound on $\sigmaS$ in \Cref{thm:SP-hockey-lower} and $\sigmaD$.}
\label{fig:feldman-comparison}
\end{figure*}

\section{\boldmath \texorpdfstring{$\sigma$}{sigma} with varying epochs}

We include a comparison of the $\sigma$ values for varying numbers of epochs, to show how the same trends hold beyond the 1 and 5 epoch regimes.
\def\figheight{0.35}
\begin{figure*}[h]
  \centering
  \includegraphics[height=\figheight\linewidth]{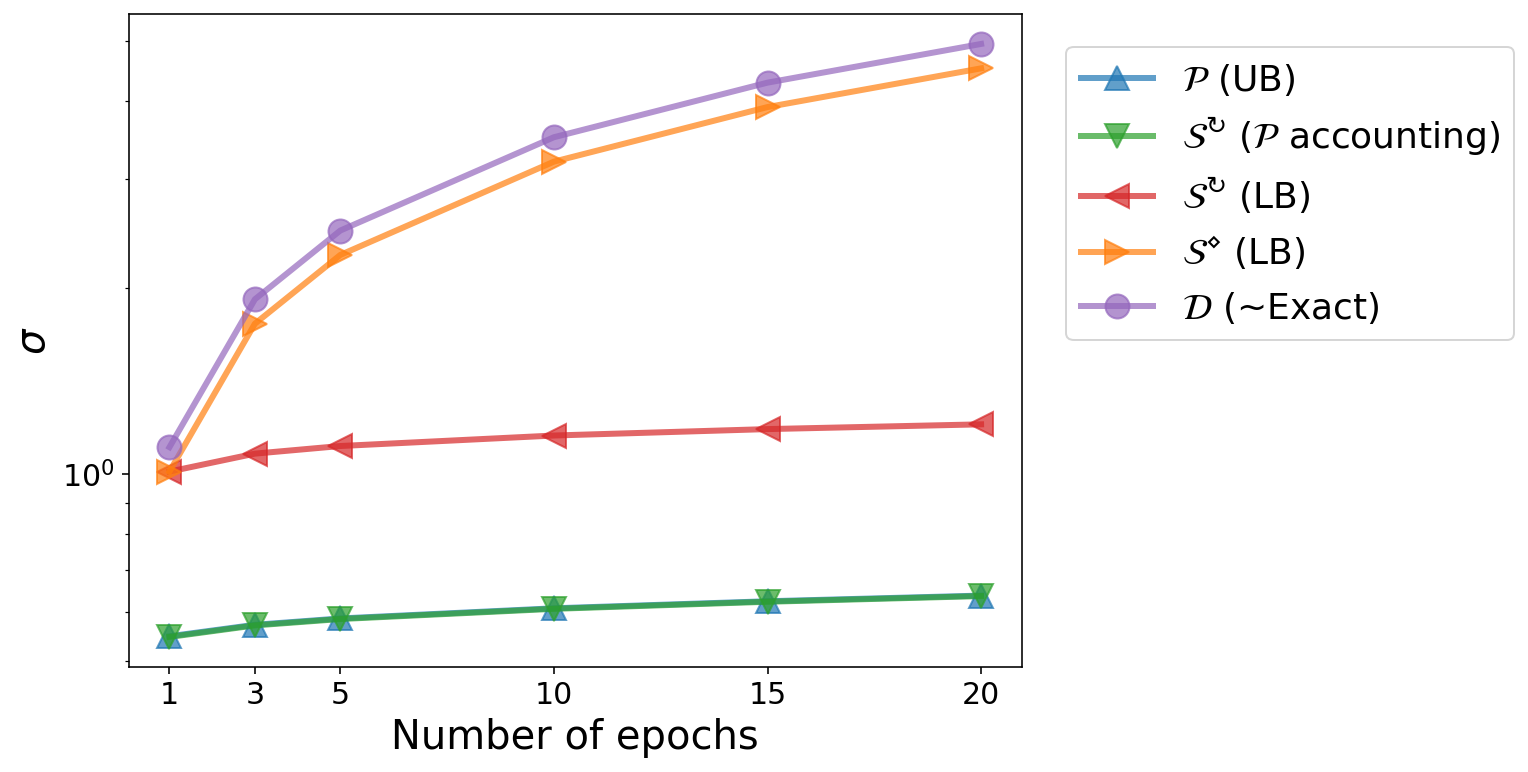}
  \caption{$\sigmaB$ values with varying numbers of epochs, fixing $\eps=5$, $\delta = 2.7 \cdot 10^{-8}$, and batch size $65\,536$.}
  \label{fig:criteo-pctr-vary-epoch}
\end{figure*}

\end{document}